\newcommand{\R}{\mathbb{R}}
\newcommand{\N}{\mathcal{N}}
\newcommand{\cL}{\mathcal{L}}
\newcommand{\cO}{\mathcal{O}}
\newcommand{\svert}{~|~}
\newcommand{\td}{\text{d}}
\newcommand{\x}{\mathbf{x}}
\newcommand{\bx}{\overline{\x}}
\newcommand{\bb}{\overline{b}}
\newcommand{\y}{\mathbf{y}}
\newcommand{\z}{\mathbf{z}}
\newcommand{\w}{\mathbf{w}}
\newcommand{\ba}{\mathbf{a}}
\newcommand{\m}{\mathbf{m}}
\newcommand{\bL}{\mathbf{L}}
\newcommand{\A}{\mathbf{A}}
\newcommand{\X}{\mathbf{X}}
\newcommand{\Y}{\mathbf{Y}}
\newcommand{\F}{\mathbf{F}}
\newcommand{\I}{\mathbf{I}}
\newcommand{\M}{\mathbf{M}}
\newcommand{\p}{\mathbf{p}}
\newcommand{\bp}{\overline{\p}}
\newcommand{\bz}{\mathbf{0}}
\newcommand{\s}{\mathbf{s}}
\newcommand{\Unif}{\text{Unif}}
\newcommand{\bo}{\text{\boldmath$\omega$}}
\newcommand{\bSigma}{\text{\boldmath$\Sigma$}}
\newcommand{\Kh}{\widehat{K}}
\newcommand{\Cov}{\text{Cov}}
\newcommand{\Var}{\text{Var}}
\newcommand{\tr}{\text{tr}}
\newcommand{\diag}{\text{diag}}
\newcommand{\KL}{\text{KL}}
\newtheorem{identity}{Identity}
\newtheorem{proposition}{Proposition}
\theoremstyle{definition}
\newtheorem{discussion}{Discussion}
\def\fl[#1\]{\begin{align}#1\end{align}}
\def\[#1\]{\begin{align*}#1\end{align*}}
\def\*[#1\]{\begin{align*}#1\end{align*}}
\titlespacing\section{0pt}{0pt plus 1pt minus 0pt}{0pt plus 0pt minus 0pt}
\titlespacing\subsection{0pt}{0pt plus 1pt minus 0pt}{0pt plus 0pt minus 0pt}
\titlespacing\subsubsection{0pt}{0pt plus 1pt minus 0pt}{0pt plus 0pt minus 0pt}
\def\expandafter\normalsize\expandafter{%
    \normalsize
    \setlength\abovedisplayskip{2pt}
    \setlength\belowdisplayskip{2pt}
    \setlength\abovedisplayshortskip{0pt}
    \setlength\belowdisplayshortskip{0pt}
}
\icmltitlerunning{Improving the Gaussian Process Sparse Spectrum Approximation by Representing Uncertainty in Frequency Inputs}
\begin{document} 

\twocolumn[
\icmltitle{Improving the Gaussian Process Sparse Spectrum Approximation by Representing Uncertainty in Frequency Inputs}

\vspace{-2mm}
\icmlauthor{Yarin Gal}{yg279@cam.ac.uk}
\icmlauthor{Richard Turner}{ret26@cam.ac.uk}
\icmladdress{University of Cambridge}


\vskip 0.in
]

\begin{abstract} 
Standard sparse pseudo-input approximations to the Gaussian process (GP) cannot handle complex functions well. Sparse spectrum alternatives attempt to answer this but are known to over-fit. We suggest the use of variational inference for the sparse spectrum approximation to avoid both issues. We model the covariance function with a finite Fourier series approximation and treat it as a random variable. The random covariance function has a posterior, on which a variational distribution is placed. The variational distribution transforms the random covariance function to fit the data. We study the properties of our approximate inference, compare it to alternative ones, and extend it to the distributed and stochastic domains. Our approximation captures complex functions better than standard approaches and avoids over-fitting.
\end{abstract} 

\vspace{-5mm}
\section{Introduction}

The Gaussian process \citep[GP, ][]{Rasmussen2005Gaussian} is a powerful tool for modelling distributions over non-linear functions. It offers robustness to over-fitting, a principled way to tune hyper-parameters, and uncertainty bounds over the outputs. 
These properties are critical for tasks including non-linear function regression, reinforcement learning, density estimation, and more \citep{brochu2010tutorial,rasmussen2003gaussian,engel2005reinforcement,titsias2010bayesian}.
But the advantages of the Gaussian process come with a great computational cost. Evaluating the GP posterior involves a large matrix inversion -- for $N$ data points the model requires $\cO(N^3)$ time complexity. 

Many approximations to the GP have been proposed to reduce the model's time complexity. \citet{Quinonero-candela05unifying} survey approaches relying on \textit{``sparse pseudo-input''} approximations. In these, a small number of points in the input space with corresponding outputs (``inducing inputs and outputs'') are used to define a new Gaussian process. The new GP is desired to be as close as possible to the GP defined on the entire dataset, and the matrix inversion is now done with respect to the inducing points alone. 
These approaches are suitable for locally complex functions. The approximate model would place most of the inducing points in regions where the function is complex, and only a small number of points would be placed in regions where the function is not. Highly complex functions cannot be modelled well with this approach.

\citet{lazaro2010sparse} suggested an alternative approximation to the GP model. In their paper they suggest the decomposition of the GP's stationary covariance function into its Fourier series. The infinite series is then approximated with a finite one. They optimise over the frequencies of the series to minimise some divergence from the full Gaussian process. This approach was named a \textit{``sparse spectrum''} approximation.
This approach is closely related to the one suggested by \citet{rahimi2007random} in the randomised methods community (random projections). In \citet{rahimi2007random}'s approach, the frequencies are randomised (sampled from some distribution rather than optimised) and the Fourier coefficients are computed analytically. 
Both approaches capture globally complex behaviour, but the direct optimisation of the different quantities often leads to some form of over-fitting (as reported in \citep{wilson2014fast} for the SSGP and shown below for random projections).
Similar over-fitting problems observed with the \textit{sparse pseudo-input} approximation were answered with variational inference \citep{Titsias2009Variational}. 

We suggest the use of variational inference for the \textit{sparse spectrum} approximation. 
This allows us to avoid over-fitting while efficiently capturing globally complex behaviour.
We replace the stationary covariance function with a finite approximation obtained from Monte Carlo integration.
This finite approximation is a random variable, and conditioned on a dataset this random variable has an intractable posterior. 
We approximate this posterior with variational inference, resulting in a non-stationary finite rank covariance function. 
The approximating variational distribution transforms the covariance function to fit the data well. The prior from the GP model keeps the approximating distribution from over-fitting to the data.

Like in \citep{lazaro2010sparse}, we can marginalise over the Fourier coefficients. This results in approximate inference with $\cO(NK^2 + K^3)$ time complexity with $N$ data points and $K$ inducing frequencies (components in the Fourier expansion). 
This is the same as that of the sparse pseudo-input and sparse spectrum approximations. 
We can further optimise a variational distribution over the frequencies reducing the time complexity to $\cO(NK^2)$. This factorises the lower bound and allows us to perform distributed inference, resulting in $\cO(K)$ time complexity given a sufficient number of nodes in a distributed framework. 
We can approximate the latter lower bound and use random subsets of the data (mini batches) employing stochastic variational inference \citep{Hoffman2013Stochastic}. This results in $\cO(SK^2)$ time complexity with $S << N$ the size of the mini-batch\footnote{Python code for all inference algorithms is available at \url{http://github.com/yaringal/VSSGP}}. 

In the experiments section we demonstrate the properties of our GP approximation and compare it to alternative approximations.
We describe qualitative properties of the approximation and discuss how the approximation can be used to learn the covariance function by fitting to the data. 
We compare the approximation to the full Gaussian process, sparse spectrum GP, sparse pseudo-input GP, and random projections. 
We show that alternative approximations either over-fit or under-fit even on simple datasets. 
We empirically demonstrate the advantages of the variational inference in avoiding over-fitting by comparing the approximation to the sparse spectrum one on audio data from the TIMIT dataset.
We compare the stochastic optimisation to the non-stochastic one, and compare the performance to the sparse pseudo-input SVI. 
Finally, we inspect the model's time accuracy trade-off and show that it avoids over-fitting as the number of parameters increases.

\section{Sparse Spectrum Approximation in Gaussian Process Regression}

We use Bochner’s theorem \citep{bochner1959lectures} to reformulate the covariance function in terms of its frequencies.
Since our covariance function $K(\x, \y)$ is stationary, it can be represented as $K(\x-\y)$ for all $\x,~\y \in \R^{Q}$. Following Bochner’s theorem, $K(\x-\y)$ can be represented as the Fourier transform of some finite measure $\sigma^2 p(\w)$ with $p(\w)$ a probability density,
\fl[
K(\x - \y) &= \int_{\R^Q} \sigma^2 p(\w) e^{- 2 \pi i\w^T(\x-\y)} \td \w \notag\\
&=
\int_{\R^Q} \sigma^2 p(\w) \cos(2 \pi \w^T(\x-\y)) \td \w
\label{eq:bochner}
\]
since the covariance function is real-valued. 

This can be approximated as a finite sum with $K$ terms using Monte Carlo integration,
\[
K(\x - \y) &\approx 
\frac{\sigma^2 }{K} \sum_{k=1}^K 
\cos \big( 2 \pi \w_k^T \big( (\x-\z_k)-(\y-\z_k) \big) \big) 
\]
with $\w_k \sim p(\w)$ and $\z_k$ some $Q$ dimensional vectors for $k = 1, ..., K$. The points $\z_k$ act as inducing inputs, and will have corresponding inducing frequencies in our approximation. For the sparse spectrum GP, these points take value 0. These will be explained in detail in a later section.

Using identity \ref{identity:1} proved in the appendix we rewrite the terms above for every $k$ as
\[
&\cos \big( 2 \pi \w_k^T \big( (\x-\z_k)-(\y-\z_k) \big) \big) 
\notag\\
&\qquad\qquad =
\int_{0}^{2\pi} \frac{1}{2\pi} 
\sqrt{2} \cos \big(2 \pi \w_k^T(\x-\z_k) + b \big) \notag\\
&\qquad\qquad\qquad \cdot 
\sqrt{2} \cos \big(2 \pi \w_k^T(\y-\z_k) + b \big) \td b.
\]

This integral can again be approximated as a finite sum using Monte Carlo integration. To keep the notation simple, we approximate the integral with a single sample\footnote{The above transformation and approximate integration are used in the \textit{randomised methods} literature  \citep[``Random projections'', ][]{rahimi2007random}. It was shown to give better approximation than Monte Carlo integration of eq.\ \ref{eq:bochner}. Intuitively it is equivalent to a random phase shift for each basis function.} for every $k$,
\vspace{-2mm}
\[
K(\x - \y) &\approx \frac{\sigma^2 }{K} \sum_{k=1}^K
\sqrt{2} \cos(2 \pi \w_k^T(\x-\z_k) + b_k)  \notag\\
&\qquad \qquad \cdot 
\sqrt{2} \cos(2 \pi \w_k^T(\y-\z_k) + b_k) \\
&=: \Kh(\x - \y)
\]
with $b_k \sim \Unif[0, 2\pi]$, defining the approximate covariance function $\Kh$. 
We refer to $(\w_k)_{k=1}^K$ as inducing frequencies and to $(b_k)_{k=1}^K$ as phases, and denote $\bo = (\w_k, b_k)_{k=1}^K$.
Note that this integral could be approximated with any arbitrary number of samples instead. 

We denote $\X \in \R^{N \times Q}$ the inputs and $\Y \in \R^{N \times D}$ the outputs of a real-valued dataset with $N$ data points. In Gaussian process regression we find the probability $P(\Y | \X)$ with the assumption that the function generating $\Y$ is drawn from a Gaussian process. The full GP model is defined as (assuming stationary covariance function $K(\cdot, \cdot)$): 
\[
\F \svert \X &\sim \N(\bz, K(\X, \X)) \\
\Y \svert \F &\sim \N(\F, \tau^{-1} \I)
\]
with some precision hyper-parameter $\tau$.

Using $\Kh$ instead as the covariance function of the Gaussian process yields the following generative model: 
\[
\w_k &\sim p(\w), 
~b_k \sim \Unif[0, 2\pi], \notag\\
\bo &= (\w_k, b_k)_{k=1}^K \notag\\
\Kh(\x, \y) &= \frac{\sigma^2 }{K} \sum_{k=1}^K \sqrt{2} \cos \big(2 \pi \w_k^T(\x-\z_k) + b_k \big)  \notag\\
&\qquad \qquad \cdot 
\sqrt{2} \cos \big(2 \pi \w_k^T(\y-\z_k) + b_k \big) \notag\\
\F \svert \X, \bo &\sim \N(\bz, \Kh(\X, \X)) \notag\\
\Y \svert \F &\sim \N(\F, \tau^{-1} \I).
\]

\section{Random Covariance Functions}
$K$ is a deterministic covariance function of its inputs; $\Kh$ is a random finite rank covariance function. As such, we can find the conditional distribution of the covariance function given a dataset (more precisely, the conditional distribution of $\bo$). 
This is a powerful view of this approximation -- it allows us to transform the covariance function to fit the data well, while the prior keeps it from over-fitting to the data. 
We will use $\Kh$ as our Gaussian process covariance function from now on, replacing $K$.
This results in the following predictive distribution:
\[
p(\Y | \X) &= \int p(\Y | \F) p(\F | \bo, \X) p(\bo) \td \bo \td \F.
\]

We can integrate this analytically for $\F$ and obtain
\[
p(\Y | \X) 
&= \int \N(\Y; \bz, \Kh(\X,\X) + \tau^{-1} \I) p(\bo) \td \bo 
\]
but this involves the inversion of $\Kh(\X,\X) + \tau^{-1} \I$, which does not allow us to integrate over $\bo$ (even variationally!). Instead, we introduce an auxiliary random variable. 

Denoting the $1 \times K$ row vector 
\[
\phi(\x, \bo) = \bigg[ \sqrt{\frac{2\sigma^2}{K}} \cos \big(2 \pi \w_k^T(\x - \z_k) + b_k \big) \bigg]_{k=1}^K
\]
and the $N \times K$ feature matrix $\Phi = [\phi(\x_n, \bo)]_{n=1}^N$, we have $\Kh(\X,\X) = \Phi\Phi^T$.
We rewrite $p(\Y | \X)$ as 
\[
&p(\Y | \X) = \int \N(\Y; \bz, \Phi\Phi^T + \tau^{-1} \I) p(\bo) \td \bo.
\]
Following identity \citep[page 93, equations 2.113 $-$ 2.115]{Bishop2006Pattern} we introduce a $K \times 1$ auxiliary random variable $\ba_d \sim \N(0, \I_K)$ to the distribution inside the integral above,
\[
&\N(\y_d; \bz, \Phi\Phi^T + \tau^{-1} \I) \notag\\
&\qquad = \int \N(\y_d; \Phi\ba_d, \tau^{-1} \I) \N(\ba_d; 0, \I_K) \td \ba_d,
\]
where $\y_d$ is the $d$'th column of the $N \times D$ matrix $\Y$. 

Writing $\A = [\ba_d]_{d=1}^D$, the above is equivalent to\footnote{This is equivalent to the weighted basis function interpretation of the Gaussian process \citep{Rasmussen2005Gaussian}.} 
\fl[\label{eq:Y_given_A_X_o}
p(\Y | \X) 
&= \int p(\Y | \A, \X, \bo) p(\A) p(\bo) \td \A \td \bo.
\]
We refer to  $\A \in \R^{K \times D}$ as the Fourier coefficients. 

Regarding $\bo$ as parameters and optimising these values (integrating over $\A$) results in the \textit{sparse spectrum} approximation \citep{lazaro2010sparse}. 
Regarding $\A$ as parameters and optimising these values (leaving $\bo$ constant) results in a method known as ``random projections'' \citep{rahimi2007random}.
Related work to random projections variationally integrates over the hyper-parameters while leaving $\bo$ constant \citep{Tan2013Variational}.

We can extend the above to sums of covariance functions as well. Following proposition \ref{prop:2} in the appendix, given a sum of covariance functions with $L$ components (with each corresponding to $\Phi_i$ an $N \times K$ matrix) we have $\Phi = [\Phi_i]_{i=1}^L$ an $N \times LK$ matrix. 

As an example covariance function of this form consider the spectral mixture (SM) covariance function \citep{lindgren2012stationary,wilson2013gaussian}. 
This covariance function has been used in the audio processing community since the '70s and was recently introduced to the machine learning community. It generalises many known covariance functions, such as the periodic covariance function, the automatic relevance determination (ARD) squared exponential (SE) covariance function, products of these and weighted sums of these products. 

We will continue the development of our method using this covariance function. Note however that our method is general and can be extended for other covariance functions as well.
The spectral mixture covariance function with $L$ components is given by
\[
K(\x, \y) = \sum_{i=1}^L \sigma_i^2 
&\exp \bigg(
-\frac{1}{2} \sum_{q=1}^Q \frac{(x_q - y_q)^2}{l_{iq}^2}
\bigg) \notag\\
& \cdot \prod_{q=1}^Q
\cos \bigg( \frac{2 \pi (x_q - y_q)}{p_{iq}} \bigg)
\]
with weights $\sigma_i^2$, length-scales $l_{iq}$ and periods $p_{iq}^{-1}$. We write $\bp_i = [p_{iq}^{-1}]_{q=1}^Q$ and $\bL_i = \diag([2 \pi l_{qi} ]_{q=1}^Q)$. This covariance function reduces to a sum of squared exponential (SE) covariance functions for $p_{iq} = \infty$ for all $i$ and $q$.

For $p(\w)$ composed of a single SM component, we follow proposition \ref{prop:3} in the appendix and perform a change of variables, resulting in $p(\w)$ a standard normal distribution with the parameters of $p(\w)$ now expressed in $\Phi$ instead.
For $p(\w)$ composed of several components, for each component $i$ we get $\Phi_i$ is an $N \times K$ matrix with elements
\[
\sqrt{\frac{2\sigma_i^2}{K}} \cos \big(2 \pi (\bL_i^{-1} \w_k + \bp_i)^T (\x - \z_k) + b_k \big),
\] 
where for simplicity, we index $\w_k$ and $b_k$ with $k=1,...,LK$ as a function of $i$.

\section{Variational Inference}
The predictive distribution for an input point $\x^*$ is given by 
\fl[\label{eq:pred_dist}
&p(\y^* | \x^*, \X, \Y) = \int p(\y^* | \x^*, \A, \bo) p(\A, \bo | \X, \Y)\td \A \td \bo,
\]
with $\y^* \in \R^{1 \times D}$.
The distribution $p(\A, \bo | \X, \Y)$ cannot be evaluated analytically. Instead we define an approximating \textit{variational} distribution $q(\A, \bo)$, whose structure is easy to evaluate.

We would like our approximating distribution to be as close as possible to the posterior distribution obtained from the full GP. We thus minimise the Kullback--Leibler divergence 
\[
\KL(q(\A, \bo) ~|~ p(\A, \bo | \X, \Y)),
\]
resulting in the approximate predictive distribution 
\fl[\label{eq:approx_pred_dist}
&q(\y^* | \x^*) = \int p(\y^* | \x^*, \A, \bo) q(\A, \bo) \td \A \td \bo.
\]

Minimising the Kullback--Leibler divergence is equivalent to maximising the log evidence lower bound
\fl[
&\cL := \int q(\A, \bo) \log p(\Y | \A, \X, \bo) \td \A \td \bo \notag\\
&\qquad \qquad \qquad \qquad 
- \KL(q(\A, \bo) || p(\A) p(\bo)) \label{eq:lower_bound}
\]
with respect to the variational parameters defining $q(\A,\bo)$. 

We define a factorised variational distribution $q(\A,\bo) = q(\A) q(\bo)$. We define $q(\bo)$ with $\bo = (\w_k, b_k)_{k=1}^K$ to be a joint Gaussian distribution and a uniform distribution,
\[
\w_k &\sim \N(\mu_k, \Sigma_k), &&k = 1, ..., LK \\
b_k &\sim \Unif(\alpha_k, \beta_k), &&k = 1, ..., LK
\]
with $\Sigma_k$ diagonal, $0 \leq \alpha_k \leq \beta_k \leq 2 \pi$, and define $q(\A) = \prod_{d=1}^D q(\ba_d)$ (with $\ba_d \in \R^{LK \times 1}$) by
\[
\ba_d \sim \N(\m_d, \s_d), &&d = 1, ..., D
\]
with $\s_d$ diagonal. We evaluate the log evidence lower bound and optimise over $\{ \mu_k, \Sigma_k, \alpha_k, \beta_k \}_{k=1}^{LK}$, $\{ \m_d, \s_d \}_{d=1}^D$, and $\{ \sigma_i, \bL_i, \bp_i \}_{i=1}^L$ to maximise Eq.\ \ref{eq:lower_bound}.

\subsection{Evaluating the Log Evidence Lower Bound}

Given $\A$ and $\bo$, we evaluate the probability of the $d$'th element, $\y_d$, as
\[
&\log p(\y_d | \A, \X, \bo) = 
\notag\\
&\qquad
-\frac{N}{2} \log(2 \pi \tau^{-1}) - \frac{\tau}{2} (\y_d - \Phi\ba_d)^T(\y_d - \Phi\ba_d).
\]
Note that $\y_d$ is an $N \times 1$ vector, $\Phi$ is an $N \times LK$ matrix, and $\ba_d$ is an $LK \times 1$ vector.

We need to evaluate the expectations of $\y_d^T\Phi\ba_d$ and $\ba_d^T\Phi^T\Phi\ba_d$ (both scalar values) under $q(\A)q(\bo)$:
\fl[\label{eq:exp_a_Phi}
&E_{q(\A)q(\bo)}\big( \y_d^T\Phi\ba_d \big) 
 =
\y_d^T E_{q(\bo)}\big( \Phi \big) E_{q(\A)}\big( \ba_d \big), 
\]
and
\fl[\label{eq:exp_a_Phi_Phi_a}
E_{q(\A)q(\bo)}\big( 
\ba_d^T\Phi^T\Phi\ba_d
 \big) 
 &= 
\tr \bigg(
	E_{q(\bo)}\big( 
		\Phi^T\Phi
	 \big) 
	E_{q(\A)}\big( 
		 \ba_d \ba_d^T
	 \big) 
\bigg).
\]

The values $E_{q(\A)}( \ba_d )$ and $E_{q(\A)}( \ba_d \ba_d^T )$ are evaluated as 
\[
E_{q(\A)}( \ba_d ) &= \m_d, \\
E_{q(\A)}( \ba_d \ba_d^T ) &= \s_d + \m_d \m_d^T.
\]

Next we evaluate $E_{q(\bo)}\big(\Phi)$. Remember that $\Phi$ depends on $\bo$ and that $q(\bo) = q((\w_k, b_k)_{k=1}^{LK})$. We write as shorthand $\bx_{nk} := 2\pi \bL_i^{-1} (\x_n - \z_k)$ and $\bb_{nk} = b_k + 2 \pi \bp_i^T(\x_n-\z_k)$ with component $i$ appropriate to $k$.
Following identity \ref{identity:2} proved in the appendix, we have that 
the expectation of a single element in the vector with respect to $q(\w_k)$ is
\[
&E_{q(\w_k)}\big(\cos \big(\w_k^T \bx_{nk} + \bb_{nk} \big)\big) \notag\\
& \quad = e^{-\frac{1}{2} \bx_{nk}^T \Sigma_k \bx_{nk}} \cos\big(\mu_k^T \bx_{nk} + \bb_{nk} \big).
\]
where $\mu_k$ is the mean of $q(\w_k)$ and $\Sigma_k$ is its covariance.
We get 
\fl[\label{eq:exp_Phi}
\bigg( E_{q(\bo)}\big( \Phi \big) \bigg)_{n,k} &=
\sqrt{\frac{2\sigma_i^2}{K}} e^{-\frac{1}{2} \bx_{nk}^T \Sigma_k \bx_{nk}}
\notag \\ &\qquad \cdot
E_{q(b_k)} \big( \cos(\mu_k^T \bx_{nk} + \bb_{nk}) \big)
\]
with the integration with respect to $q(b_k)$ trivial.

Next we evaluate $E_{q(\bo)}\big( \Phi^T\Phi \big)$, an $LK \times LK$ matrix:
\fl[\label{eq:exp_PhiT_Phi}
&\bigg( E_{q(\bo)}\big( \Phi^T\Phi \big) \bigg)_{i, j} \notag \\ &\quad
= \sum_{n=1}^N \frac{2 \sigma_i^2}{LK} 
E_{q(\w_i, b_i, \w_j, b_j)}\big( \cos(\w_i^T \bx_{ni} + \bb_{ni}) \notag\\
&\qquad\qquad\qquad\qquad\qquad\qquad \cdot \cos(\w_j^T \bx_{nj} + \bb_{nj})\big)
\]
for $i, j \leq LK$. 

For $i \neq j$, from independence we can break the expectation of each term into 
\[
&E_{q(\w_i, b_i, \w_j, b_j)}\big(\cos(\w_i^T \bx_{ni} + \bb_{ni}) \cos(\w_j^T \bx_{nj} + \bb_{nj})\big) 
\notag \\ &\quad =
E_{q(\w_i, b_i)}\big(\cos(\w_i^T \bx_{ni} + \bb_{ni})\big)
\notag\\ &\quad\quad \cdot
E_{q(\w_j, b_j)}\big(\cos(\w_j^T \bx_{nj} + \bb_{nj})\big),
\]
and for $i = j$,
\[
&E_{q(\w_i, b_i)}\big(\cos(\w_i^T \bx_{ni} + \bb_{ni})^2\big) = \frac{1}{2} + 
\notag \\ &\qquad\qquad
\frac{1}{2} e^{-2 \bx_{ni}^T \Sigma_i \bx_{ni}} E_{q(b_i)} \big( \cos(2\mu_i^T \bx_{ni} + 2\bb_{ni}) \big) 
\]
following identity \ref{identity:3}. 

In conclusion, we obtained our optimisation objective:
\fl[\label{eq:lower_bound_factorised}
&\cL = 
\sum_{d=1}^D \bigg( -\frac{N}{2} \log(2 \pi \tau^{-1}) -\frac{\tau}{2} \y_d^T\y_d
\notag \\ &\qquad \qquad \quad 
+\tau \y_d^T E_{q(\bo)}\big( \Phi \big) \m_d \notag\\
&\qquad \qquad \quad -\frac{\tau}{2} 
	\tr \big(
		E_{q(\bo)}( 
			\Phi^T\Phi
		) 
		( 
			 \s_d + \m_d \m_d^T
		) 
	\big)
 \bigg) 
 \notag \\ &\qquad 
- \KL(q(\A) || p(\A)) - \KL(q(\bo) || p(\bo)). 
\]

The KL divergence terms can be evaluated analytically for the Gaussian and uniform distributions. 

\subsection{Optimal variational distribution over $\A$}

In the above we optimise over the variational parameters for $\A$, namely $\m_d$ and $\s_d$ for $d \leq D$. This allows us to attain a reduction in time complexity compared to previous approaches and use stochastic inference, as will be explained below. This comes with a cost, as the dependence between $\bo$ and $\A$ can render the optimisation hard.

We can find the optimal variational distribution $q(\A)$ analytically, which allows us to optimise $\bo$ and the hyper-parameters alone. In proposition \ref{prop:1} in the appendix we show that the optimal variational distribution is given by
\[
q(\ba_d) = \N(\bSigma E_{q(\bo)}(\Phi^T) \y_d, ~\tau^{-1} \bSigma)
\]
with $\bSigma = (E_{q(\bo)}(\Phi^T \Phi) + \tau^{-1} I)^{-1}$.

The lower bound to optimise then reduces to
\fl[\label{eq:lower_bound_opt_A}
&\cL = 
\sum_{d=1}^D \bigg( -\frac{N}{2} \log(2 \pi \tau^{-1}) -\frac{\tau}{2} \y_d^T\y_d
+ \frac{1}{2} \log(|\tau^{-1} \bSigma|) 
\notag\\
&\qquad \qquad \quad 
+ \frac{1}{2} \tau \y_d^T E_{q(\bo)}(\Phi) \bSigma E_{q(\bo)}(\Phi^T) \y_d
\bigg)
 \notag \\ &\qquad 
- \KL(q(\bo) || p(\bo)). 
\]


\section{Distributed Inference and Stochastic Inference}

Evaluating $\cL$ in equation \ref{eq:lower_bound_factorised} requires $\cO(N K^2)$ time complexity (for fixed $Q,D$, diagonal $\s_d$, and covariance function with one component $L=1$). This stems from the term $E_{q(\bo)}\big(\Phi^T\Phi\big)$ -- a $K \times K$ matrix, where each element is composed of a sum over $N$.

Following the ideas of \citep{Gal2014DistributedB} we show that the approximation can be implemented in a distributed framework.
Write 
\[
\cL_{nd} &= 
-\frac{1}{2} \log(2 \pi \tau^{-1}) -\frac{\tau}{2} y_{nd} y_{nd}
+\tau y_{nd} E_{q(\bo)}\big( \phi_n \big) \m_d  \notag\\
&\qquad \quad -\frac{\tau}{2} 
	\tr \big(
		E_{q(\bo)}( 
			\phi_n^T\phi_n
		) 
		( 
			 \s_d + \m_d \m_d^T
		) 
	\big)
\]
with $\phi_n = \phi(\x_n, \bo)$.
We can break the optimisation objective in equation \ref{eq:lower_bound_factorised} into a sum over $N$, 
\fl[\label{eq:lower_bound_dist}
\cL & = 
\sum_{d=1}^D \sum_{n=1}^N \cL_{nd}
- \KL(q(\A) || p(\A)) - \KL(q(\bo) || p(\bo)). 
\]
These terms can be computed concurrently on different nodes in a distributed framework, requiring $\cO(K^2)$ time complexity in each iteration. We can further break the computation of $\cL_{nd}$ into a sum over $K$ as well, thus reducing the time complexity to $\cO(K)$ with $K$ inducing points. This is in comparison to distributed inference with sparse pseudo-input GPs which takes $\cO(K^3)$ time complexity with $K$ inducing points, resulting from the covariance matrix inversion. This is a major advantage, as empirical results suggest that in many real-world applications the number of inducing points should scale with the data.

We can exploit the above representation and perform stochastic variational inference (SVI) by approximating the objective with a subset of the data, resulting in noisy gradients \citep{Hoffman2013Stochastic}.
Here we use as our objective
\fl[\label{eq:lower_bound_SVI}
\cL &\approx
\frac{N}{|S|}
\sum_{d=1}^D \sum_{n \in S} \cL_{nd}
 \notag \\ &\qquad 
- \KL(q(\A) || p(\A)) - \KL(q(\bo) || p(\bo)). 
\]
with a mini-batch $S$ of randomly selected points. This is an unbiased estimator to the lower bound. The time complexity of each iteration is $\cO(SK^2)$ with $S<<N$ the size of the random subset, compared to $\cO(SK^2+K^3)$ of GP SVI using sparse pseudo-input approximation \citep{hensman2013Gaussian}.

\section{Predictive Distribution}

The approximate predictive distribution for a point $\x^*$ is given by equation \ref{eq:approx_pred_dist}.
Denoting $\M = [\m_d]_{d=1}^D$, we have 
\fl[\label{eq:pred_mean}
E_{q(\y^* | \x^*)}(\y^*) = E_{q(\bo)} \big( \phi_* \big) \M
\]
following proposition \ref{prop:4} in the appendix.

The variance of the predictive distribution is given by
\fl[\label{eq:pred_uncertainty}
&\Var_{q(\y^* | \x^*)}(\y^*) 
=
\tau^{-1}\I_D + \Psi  \\ 
&\qquad + \M^T \big( E_{q(\bo)}\big(\phi_*^T \phi_*\big) - E_{q(\bo)} \big( \phi_* \big)^T E_{q(\bo)} \big( \phi_* \big)\big) \M\notag
\]
with $\Psi_{i,j} = \tr \big( E_{q(\bo)}\big(\phi_*^T \phi_*\big) \cdot \s_i \big) \cdot \mathds{1}[i=j]$, following proposition \ref{prop:5} in the appendix ($\mathds{1}$ is the indicator function).

When the optimal variational distribution over $\A$ is used, we have $\M  = \bSigma E_{q(\bo)}(\Phi^T) \Y$ and $\s_i = \tau^{-1} \bSigma$ for all $i$.

\section{Properties of the Approximate Model}

We have presented a variational sparse spectrum approximation to the Gaussian process (VSSGP in short). We gave 3 approximate models with different lower bounds: an approximate model with an optimal variational distribution over $\A$ (equation \ref{eq:lower_bound_opt_A}, referred to as VSSGP), an approximate model with a \textit{factorised} lower bound over the data points (equations \ref{eq:lower_bound_factorised}, \ref{eq:lower_bound_dist}, referred to as factorised VSSGP -- fVSSGP), and an approximation to the lower bound of the factorised VSSGP for use in stochastic optimisation over subsets of the data (equation \ref{eq:lower_bound_SVI}, referred to as stochastic factorised VSSGP -- sfVSSGP).

The VSSGP model generalises on some of the GP approximations brought in the introduction.
Fixing $\Sigma_k$ at zero in our approximate model (as well as $\alpha_k$ and $\beta_k$ at 0 and $2 \pi$) and optimising only $\mu_k$ results in the \textit{sparse spectrum approximation}. 
Randomising $\mu_k$, we obtain the \textit{random projections} approximation \citep{rahimi2007random}. 
Indeed, for $\Sigma_k = \bz$ and fixed phases we have that $E_{q(\bo)}\big( \Phi^T\Phi \big) = E_{q(\bo)}\big( \Phi^T \big) E_{q(\bo)}\big( \Phi \big)$ and $E_{q(\bo)}\big( \Phi \big) = \Phi$, and equation \ref{eq:lower_bound_opt_A} recovers equation 8 in \cite{lazaro2010sparse}.

The points $\z_k$ act as \textit{inducing inputs} with $\w_k$ and $b_k$ acting as \textit{inducing frequencies and phases} at these inputs. This is similar to the \textit{sparse pseudo-input} approximation, but instead of having inducing values in the output space, we have the inducing values in the frequency domain. These are necessary to the approximation. Without these points (or equivalently, setting these to $\bz$), the features would decay quickly for data points far from the origin (the fixed point $\bz$).
~\\

The distribution over the frequencies is optimised to fit the data well. The prior is used to regulate the fit and avoid over-fitting to the data. This approximation can be used to learn covariance functions by fitting them to the data. This is similar to the ideas brought in \citep{DuvLloGroetal13} where the structure of a covariance function is sought by looking at possible compositions of these. This can give additional insight into the data. In \citep{DuvLloGroetal13} the structure of the covariance composition is used to explain the data. In the approximation presented here the spectrum of the covariance function can be used to explain the data.

It is interesting to note that although the approximated covariance function $K(\x, \y)$ has to be stationary (i.e.\ it is represented as $K(\x,\y) = K(\x-\y)$), the approximate posterior is not. This is in contrast to the SSGP that results in a stationary approximation.
Furthermore, unlike the SSGP, our approximation is not periodic. This is one of the theoretical limitations of the sparse spectrum approximation. The limitation arises from the fact that the covariance is represented as a weighted sum of cosines in the SSGP. In the our approximation this is avoided by decaying the cosines to zero.
This and other properties of the approximation are discussed further in discussion \ref{dis:1} in the appendix.

\section{Experiments}

We next study the properties of the VSSGP and compare it to alternative approximations, showing its advantages.
We compare the VSSGP to the full Gaussian process 
(denoted Full GP)
, the sparse spectrum GP approximation 
(denoted SSGP)
, a sparse pseudo-input GP approximation 
(denoted SPGP)
, and the random projections approximation 
(denoted RP)
. We compare the VSSGP to the fVSSGP and sfVSSGP that offer improved time complexity. We further compare sfVSSGP to the existing sparse pseudo-input GP approach used with SVI \citep[denoted sSPGP, ][]{hensman2013Gaussian}.
We inspect the model's time accuracy trade-off and show that it avoids over-fitting as the number of parameters increases.

\subsection{VSSGP Properties}

We evaluate the predictive mean and uncertainty of the VSSGP on the atmospheric CO$_2$ concentrations dataset derived from in situ air samples collected at Mauna Loa Observatory, Hawaii \citep{Keeling2004}. We fit the approximate model using a spectral mixture covariance function with two components initialised with periods $[5, \infty]$ and corresponding initial length-scales $[0.1, 1000]$ (resulting in a sum of SE $\times$ periodic and SE covariances). We randomised the phases following the Monte Carlo integration (instead of optimising a variational distribution on these\footnote{This seems to work better in practice.}) and initialise the frequencies at random. 
We use 10 inducing inputs for each component ($K=10$), set the observation noise precision to $\tau = 10$, and covariance noise to $\sigma^2 = 1$. LBFGS \citep{zhu1997algorithm} was used to optimise the objective given in equation \ref{eq:lower_bound_opt_A}, and was run for 500 iterations.

\begin{figure}[t!]
\vspace{-3mm}
\hspace{-5mm}
\includegraphics[width=1.08\linewidth]{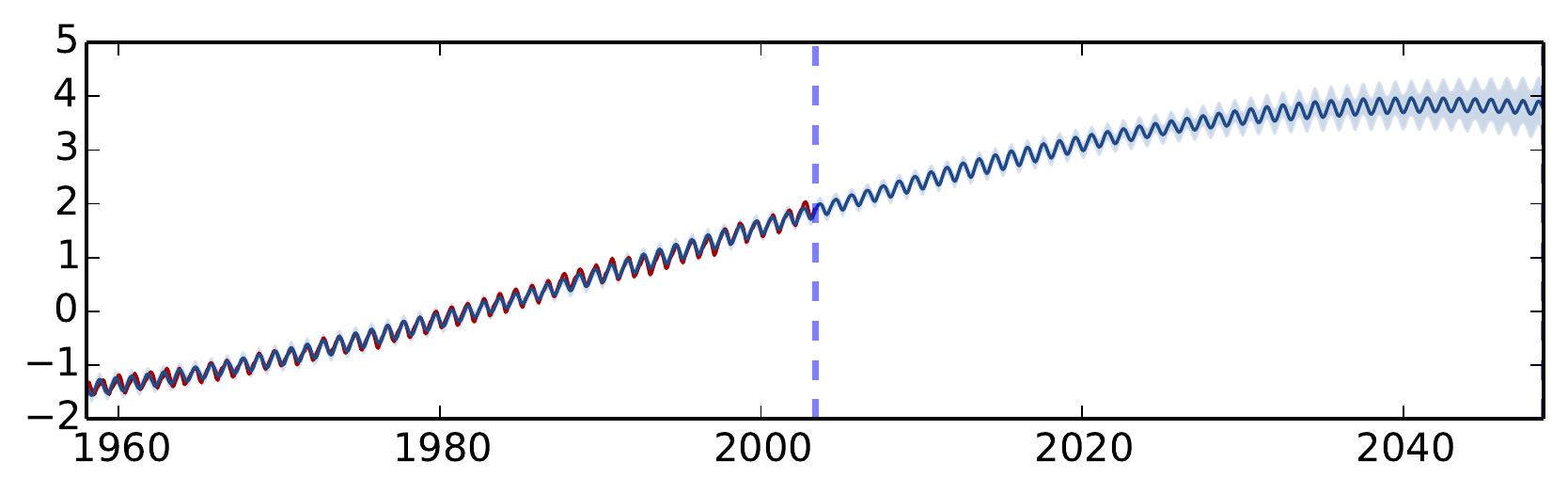}
\vspace{-8mm}
\caption{Predictive mean and uncertainty on the Mauna Loa CO$_2$ concentrations dataset.
In red is the observed function; in blue is the predictive mean plus/minus two standard deviations. In this example the approximating distribution is used with a spectral mixture covariance with two components ($L=2$, $K=10$). }\label{fig:exp1}
\vspace{-6mm}
\end{figure}

Figure \ref{fig:exp1} shows the predictive mean with the predictive uncertainty increasing far from the data. This is a property shared with the SE GP. The covariance hyper-parameters optimise to periods of $[9.8, \infty]$, length-scales $[0.09, 54]$, and covariance noise $[0.0043, 5.7]$, correspondingly. The frequency with the smallest standard-deviation (highest confidence) for the first component is $1$ (corresponding to a period of $1$ year, capturing the short term behaviour). For the second component these are $0.0053, 0.00065$ (corresponding to periods of $185$ and $1536$ years capturing the long term behaviour).

\subsection{Comparison to Existing GP Approximations}

We compare various GP approximations on the solar irradiance dataset \citep{Lean2004}. We scaled the dataset dividing by the data standard deviation, and removed 5 segments of length 20.
We followed the experiment set-up of the previous section and used the same initial parameters for all approximate models. Instead of the SM covariance function we use a single SE setting its length-scale $l=1$, and used 50 inducing inputs. LBFGS was used for 1000 iterations. The RP model was run twice with two different settings: once following the same set-up of the other models, optimising over the model hyper-parameters (RP$_1$), and once keeping all hyper-parameters fixed and setting the observation noise precision to $\tau = 100$ with $K=500$ inducing inputs\footnote{This follows the usual use of the model in the randomised methods community. We experimented with various values of $\tau$ and decided to use 100.} (RP$_2$).

\begin{figure}[t!]
\vspace{-3mm}
\center
\subfloat[Sparse pseudo-input GP]{
\includegraphics[width=1\linewidth, trim = 3mm 3mm 2mm 2mm, clip]{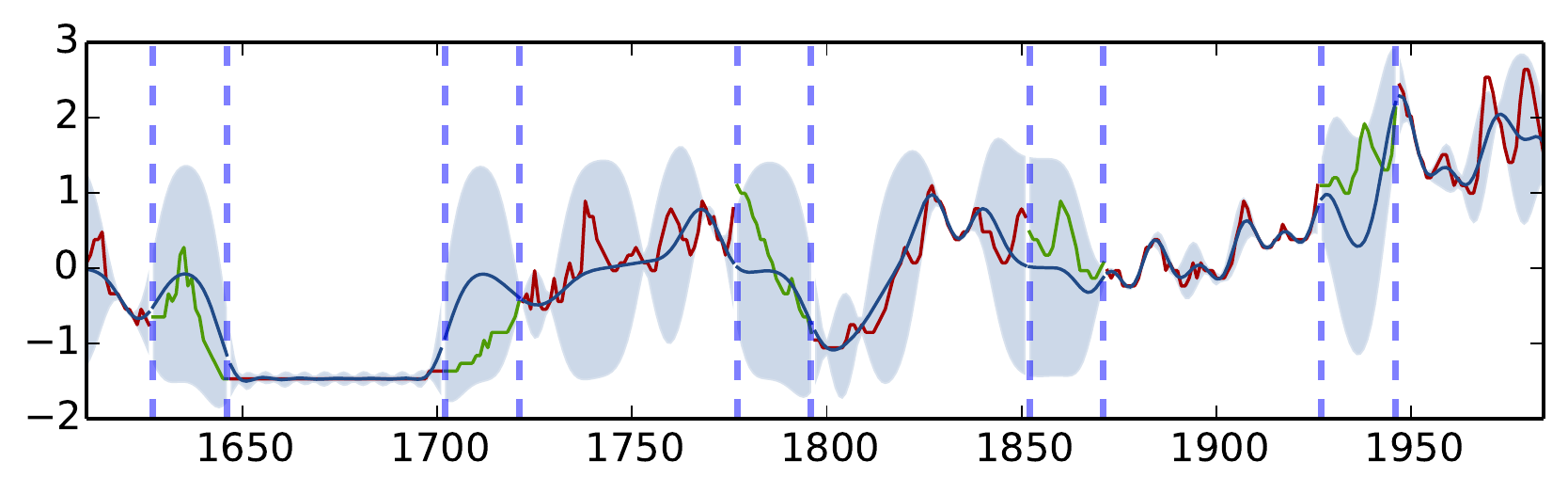}
}
\\
\vspace{-4mm}
\subfloat[Sparse Spectrum GP]{
\includegraphics[width=1\linewidth, trim = 3mm 3mm 2mm 2mm, clip]{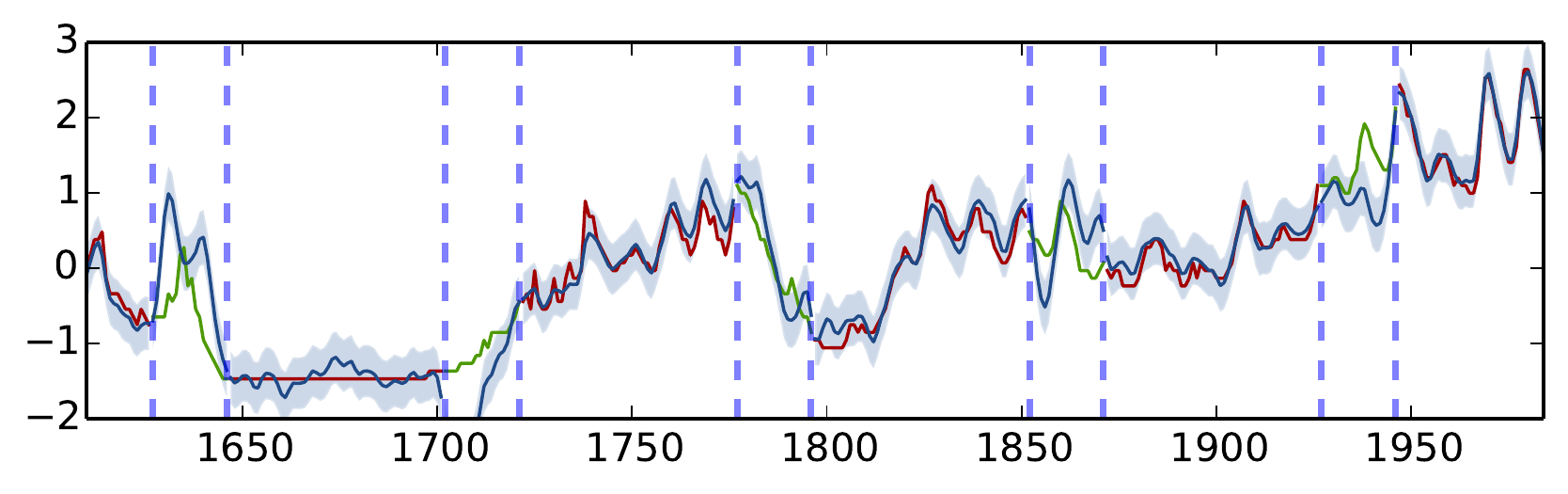}
}
\\
\vspace{-4mm}
\subfloat[Random Projections (RP$_2$, $K = 500$)]{
\includegraphics[width=1\linewidth, trim = 3mm 3mm 2mm 2mm, clip]{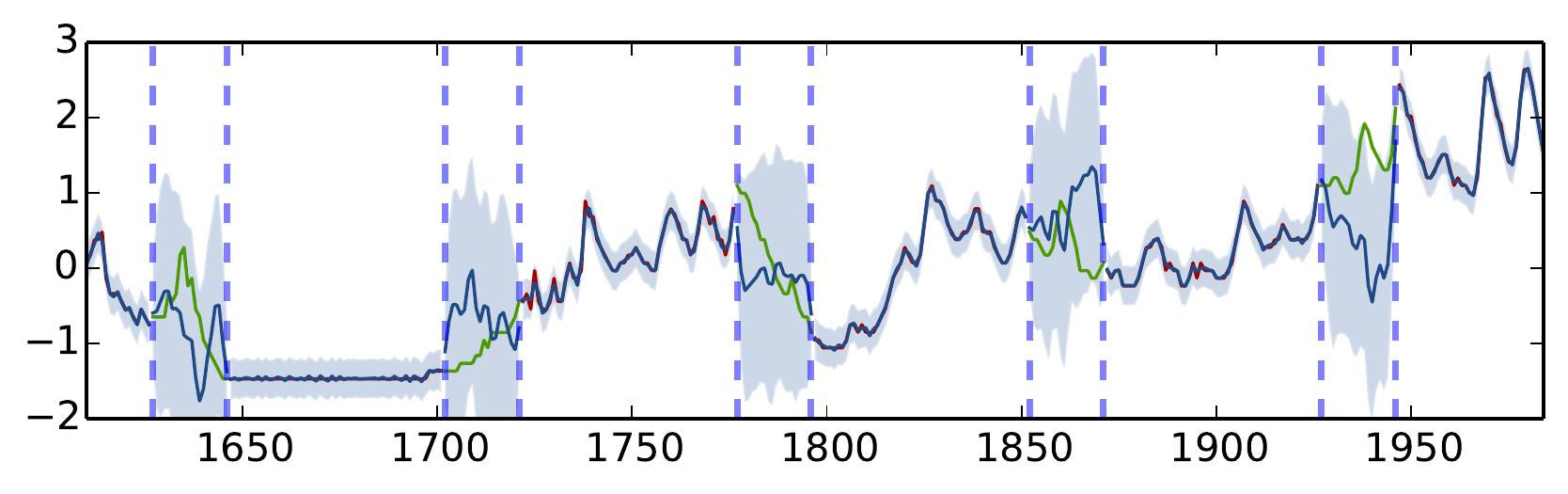}
}
\\
\vspace{-4mm}
\subfloat[Full GP]{
\includegraphics[width=1\linewidth, trim = 3mm 3mm 2mm 2mm, clip]{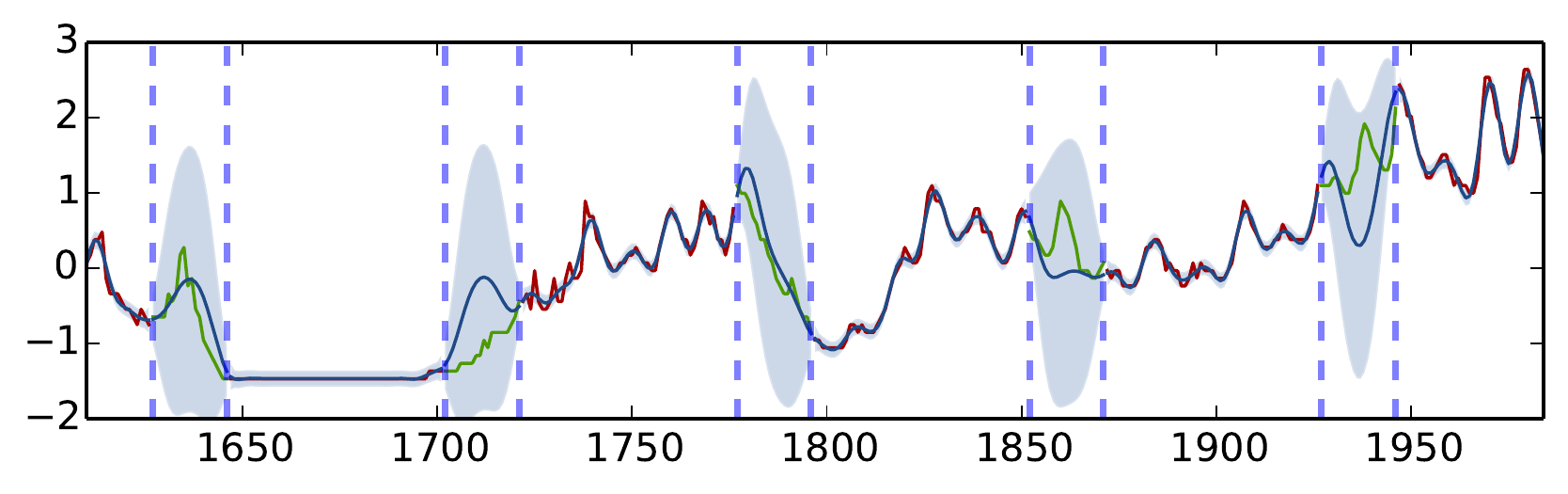}
}
\\
\vspace{-4mm}
\subfloat[\textbf{Variational Sparse Spectrum GP}]{
\includegraphics[width=1\linewidth, trim = 3mm 3mm 2mm 2mm, clip]{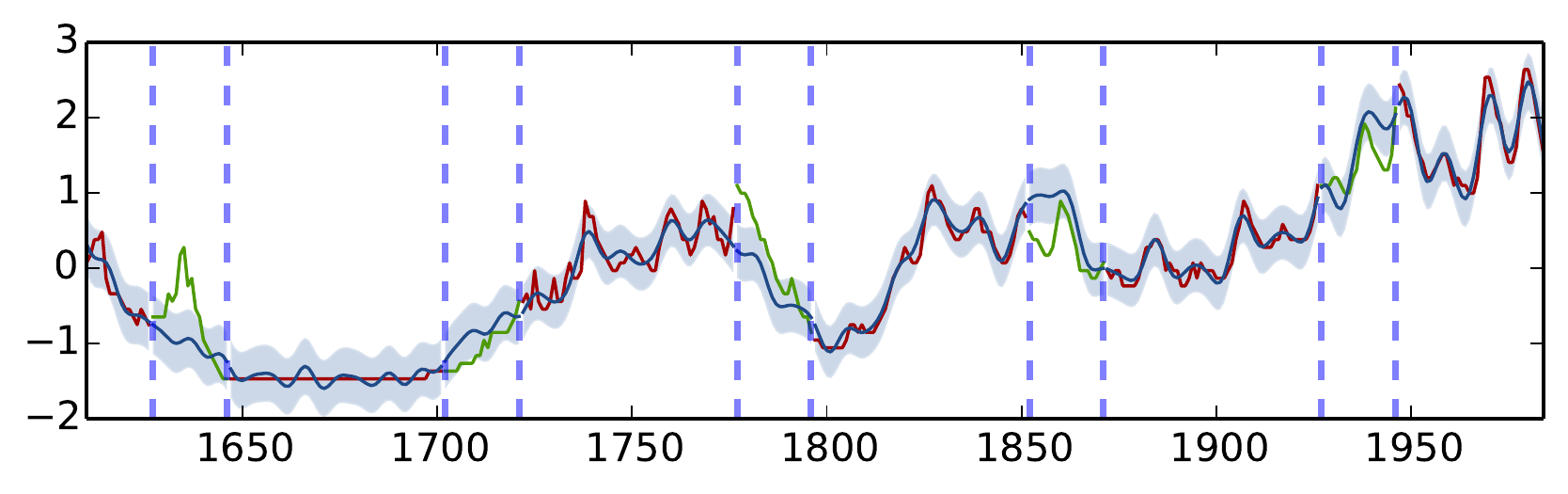}
}
\vspace{-2mm}
\caption{Predictive mean and uncertainty on the reconstructed solar irradiance dataset with missing segments, for the GP and various GP approximations. In red is the observed function and in green are the missing segments. In blue is the predictive mean plus/minus two standard deviations of the various approximations. All tests were done with the SE covariance function, and all sparse approximations use $K=50$ inducing inputs (apart from RP$_2$ with $K=500$).}\label{fig:exp2a}
\vspace{-5mm}
\end{figure}

Figure \ref{fig:exp2a} shows qualitatively the predictive mean and uncertainty of the various approaches. 
SSGP and RP seem to over-fit the function using high frequencies with high confidence. 
SPGP seems to under-fit the function, but has accurate predictive mean and uncertainty at points where many inducing inputs lie (such as the flat region).
VSSGP's predictive mean resembles that of the full GP, but with increased uncertainty throughout the space. Further, its uncertainty on the missing segments is smaller than that of the full GP (some frequencies have low uncertainty, thus used near the data).
The full GP learnt length-scale is $4$. VSSGP learnt a length-scale of $3$, and SPGP learnt a length-scale of $5$. SSGP and RP$_1$ learnt length-scales of $0.97,1.66$, i.e.\ the hyper-parameter optimisation found a local minimum.

\begin{table}[t!]
\vspace{-3mm}
\center
\begin{tabular}{|c|c|c|c|c|c|c|}
\hline 
Solar & SPGP & SSGP & RP$_1$ & RP$_2$ & GP & \textbf{VSSGP} \\ 
\hline 
Train & 0.23 & 0.15 & 0.32 & 0.04 & 0.08 & \textbf{0.13} \\ 
\hline 
Test & 0.61 & 0.63 & 0.65 & 0.76 & 0.50 & \textbf{0.41} \\ 
\hline 
\end{tabular} 
\caption{Imputation RMSE on both train and test sets, for the reconstructed solar irradiance dataset. All tests were done with the SE covariance function, and all sparse approximations use 50 inducing inputs (apart from RP$_2$ that uses $K=500$).}
\vspace{-5mm}
\label{table:exp2b}
\end{table}

Table \ref{table:exp2b} gives a quantitative comparison of the different approximations for the task of imputation. RMSE (root mean square error) of the approximate predictive mean on the missing segments was computed (test error), as well as the RMSE on the observed function (training error). Note that the full GP seems to get worse results than VSSGP. This might be because of the (slightly) larger learnt length-scale.

\subsection{From SSGP to Variational SSGP}

We use of variational inference in the VSSGP to avoid over-fitting to the data, a behaviour that is often observed with SSGP. 
To test this we perform a direct comparison of the proposed approximate model to SSGP on the task of audio signal imputation. 
For this experiment we used a short speech signal with 1000 samples taken from the TIMIT dataset \citep{garofolo1993timit}. We removed 5 segments of length 40 from the signal, and evaluated the imputation error (RMSE) of the predictive mean with $K=100$ inducing points. We used the same experiment set-up as before with a sum of 2 SE covariance functions with length-scales $l=[2,10]$ and observation noise precision $\tau = 1000$ matching the signal magnitude. LBFGS was run for 1000 iterations. The experiment was repeated 5 times and the results averaged.

\begin{table*}[t!]
\center
\begin{tabular}{|c|c|c|c|}
\hline 
Audio 1K & VSSGP & fVSSGP & sfVSSGP \\ 
\hline 
Train & $\mathbf{0.0062 \pm 0.00048}$ $_{(0.063 \pm 0.0068)}$ & $\mathbf{0.0054 \pm 0.00083}$ $_{(0.055 \pm 0.0088)}$ & $\mathbf{0.005 \pm 0.003}$ $_{(0.052 \pm 0.031)}$ \\ 
\hline 
Test & $\mathbf{0.034 \pm 0.0043}$ $_{(0.17 \pm 0.022)}$ & $\mathbf{0.038 \pm 0.0049}$ $_{(0.22 \pm 0.028)}$ & $\mathbf{0.04 \pm 0.0066}$ $_{(0.24 \pm 0.0089)}$ \\ 
\hline 
\end{tabular} 
\caption{Imputation RMSE (and in smaller font STFT RMSE) on train and test sets, for a speech signal segment of length 1K ($K=100$).}
\vspace{-4mm}
\label{table:exp3}
\end{table*}

\begin{table}[h]
\vspace{-1mm}
\center
\begin{tabular}{|c|c|c|}
\hline 
Audio 1K & SSGP & \textbf{VSSGP} \\ 
\hline 
Train & $0.0091 \pm 0.0042$ & $\mathbf{0.0062 \pm 0.00048}$ \\ 
\hline 
Test & $0.088 \pm 0.033$ & $\mathbf{0.034 \pm 0.0043}$ \\ 
\hline 
\end{tabular} 
\caption{Imputation RMSE on both train and test sets, for a speech signal segment of length 1K ($K=100$).}
\label{table:exp2c}
\vspace{-1mm}
\end{table}

Table \ref{table:exp2c} shows the RMSE of the training set and test set for the audio data. SSGP seems to achieve a small training error but cannot generalise well to unseen audio segments. VSSGP attains a slightly lower training error, and is able to impute unseen audio segments with better accuracy. 

It is interesting to note that using the RMSE of the short-time Fourier transform of the original signal and the predicted mean (STFT, the common metric for audio imputation, with 25ms frame size and a hop size of 12ms), the SSGP model attains a training error of $0.094 \pm 0.05$ and a test error of $0.55 \pm 0.41$. The VSSGP attains a training error of $0.067 \pm 0.0067$ with a test error of $0.17 \pm 0.022$. For comparison, baseline performance of predicting 0 attains an error of $0.44$ on the training set and an error of $0.38$ on the test set.

\subsection{VSSGP, factorised VSSGP, and stochastic factorised VSSGP}

VSSGP, fVSSGP, and sfVSSGP all rely on different lower bounds to the same approximate model. Whereas VSSGP solves for the variational distribution over the Fourier coefficients analytically, fVSSGP optimises over these quantities. This reduces the time complexity, but with the price of potentially worsened performance. sfVSSGP further employs an approximation to the lower bound using random subsets of the data -- following the idea that not all data points have to be observed for a good fit to be found. This assumption has the potential to hinder performance even further. We next assess these trade-offs.

We repeated the experimental set-up of the previous section (and use the same RMSE for VSSGP). We optimise both fVSSGP and sfVSSGP for 5000 iterations instead of the 1000 of VSSGP. This is because the improved time complexity allows us to perform more function evaluations within the same time-frame. We optimise the fVSSGP lower bound with LBFGS, and the sfVSSGP lower bound with RMSPROP \citep{Tieleman2012COURSERA}. RMSPROP performs stochastic optimisation with no need for learning-rate tuning -- the learning rate changes adaptively based on the directions of the last two gradients.

Table \ref{table:exp3} shows the RMSE for the train and test sets. Both fVSSGP and sfVSSGP effectively achieve the same test set accuracy (taking the standard deviation into account). We also see a slight decrease in train set RMSE. 

\subsection{Stochastic Variational Inference}

We compared sfVSSGP to the SPGP approximation with stochastic variational inference \citep[sSPGP, ][]{hensman2013Gaussian}. We used the same audio experiment as above, but with a signal of length 16000.  25 random segments of length 80 were removed from the signal.
sSPGP's time complexity ($\cO(S K^2 + K^3)$ with mini-batch of size $S$ and $K$ inducing points) prohibits it from being used with a large number of inducing points. We therefore used 800 inducing points for sSPGP and 400 inducing inputs for each component in the covariance function of sfVSSGP ($K=400$). 

\begin{figure}[b!]
\vspace{-8mm}
\hspace{-3mm}
\subfloat[Train error]{
\includegraphics[width=0.33\linewidth,trim = 2.5mm 3mm 2.5mm 2mm, clip]{./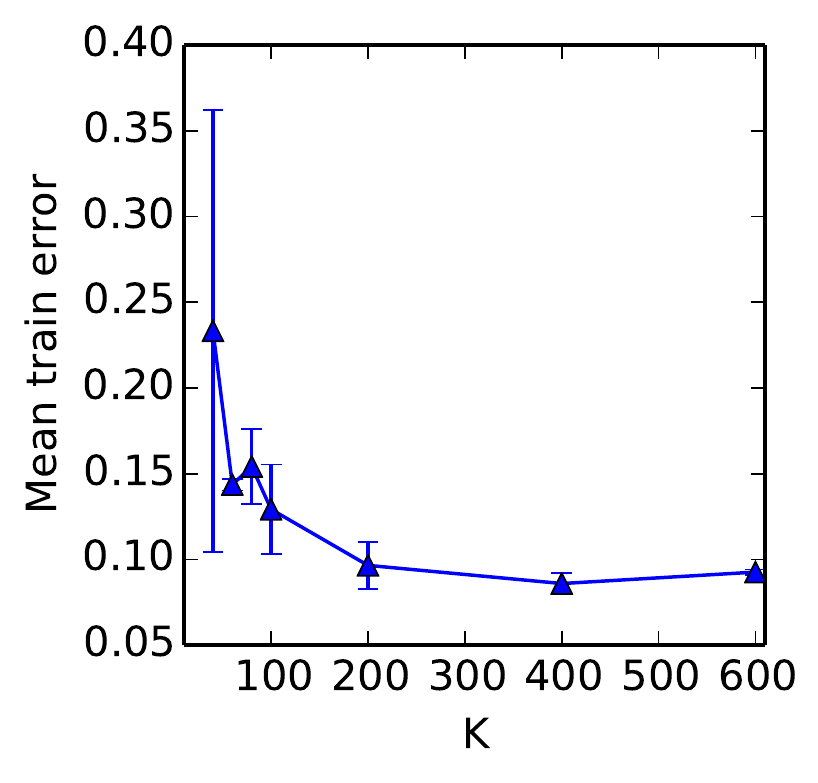}
}
\subfloat[Test error]{
\includegraphics[width=0.33\linewidth,trim = 2.5mm 3mm 2.5mm 2mm, clip]{./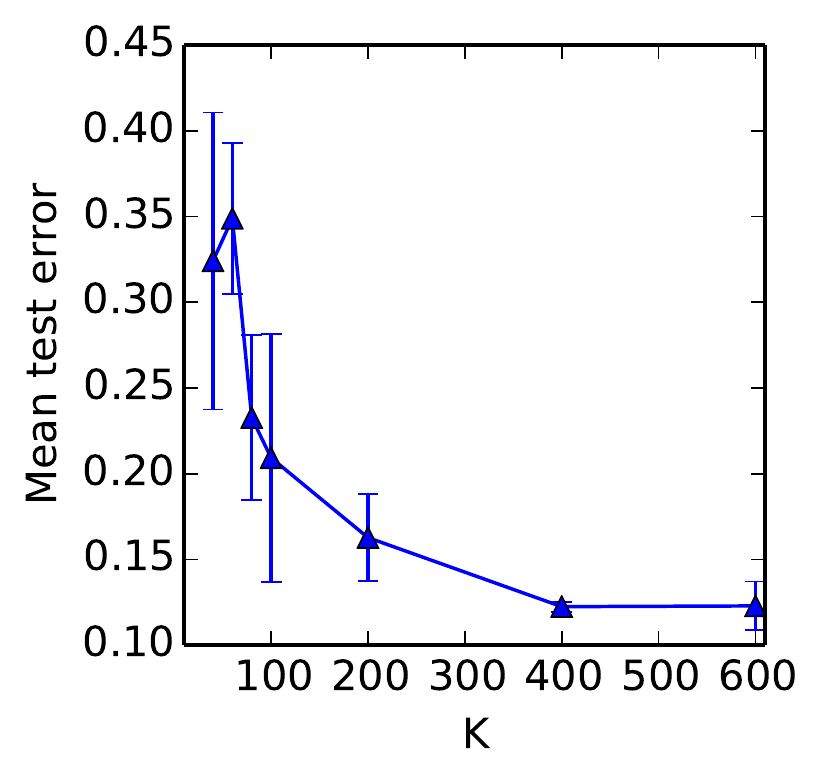}
}
\subfloat[Running time]{
\includegraphics[width=0.33\linewidth,trim = 2.5mm 3mm 2.5mm 2mm, clip]{./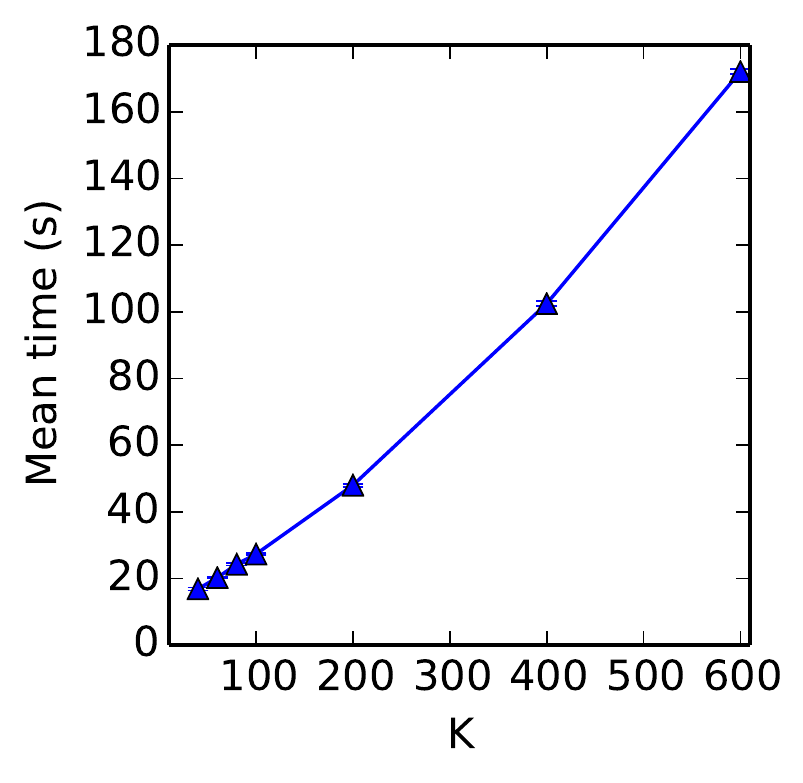}
}
\caption{Mean and standard deviation for train error, test error, and running time, all as functions of the number of inducing points ($K$) for a speech signal segment of length 4K.}
\vspace{-10mm}
\label{fig:exp3}
\end{figure}

The RMSE of sSPGP for the training set is $0.043$ and for the test set is $\textbf{0.034}$ (with a training time of 133 minutes using GPy \citep{gpy2014}). The RMSE of sfVSSGP for the training set is $0.016$ and for the test set is $\textbf{0.034}$ (with a training time of 48 minutes). Using the same audio imputation metric as in the previous section, we get that the STFT RMSE for the sSPGP on the training set is $0.54$ and on the test set is $\textbf{0.43}$. The STFT RMSE for the VSSGP on the training set is $0.18$ with a test error of $\textbf{0.3}$. 
For comparison again, baseline performance of predicting 0 attains an error of $0.52$ on the training set and an error of $\textbf{0.62}$ on the test set.

\subsection{Speed-Accuracy Trade-off}

We inspect the speed-accuracy trade-off of the approximation (RMSE as a function of the number of inducing points) for the sfVSSGP approximation. 
We repeat the same audio experiment set-up with a speech signal with 4000 samples and evaluate the imputation error (RMSE) of the predictive mean with various numbers of inducing point. RMSPROP was run for 500 iterations. The experiment was repeated 5 times and the results averaged.

Figure \ref{fig:exp3} shows that the approximation offers increased accuracy with an increasing number of inducing points. No further improvement is achieved with more than 400 inducing points. The time scales quadratically with the number of inducing points. 
Note that the approximation does not over-fit to the data as the number of parameters increases. 

\section{Discussion}
Our approximate inference relates to the Bayesian neural network \citep[\textit{Bayesian NN}, ][]{mackay1992evidence,mackay1992practical}.
In the Bayesian NN a prior distribution is placed over the weights of an NN, and a posterior distribution (over the weights and outputs) is sought. 
The model offers a Bayesian interpretation to the classic NN, with the desired property of uncertainty estimates on the outputs. 
Inference in Bayesian NNs is generally hard, and approximations to the model are often used \citep[pp 277-290]{Bishop2006Pattern}. 
Our GP approximate inference relates Bayesian NNs and GPs, and can be seen as a method for tractable variational inference in Bayesian NNs with a single hidden layer. 

Future research includes the extension of our approximation to deep GPs \citep{damianou2012deep}. We also aim to use the approximate model as a method for adding and removing units in an NN in a principled way. Lastly, we aim to replace the cosines in the Fourier expansion with alternative basis functions and study the resulting approximate model. 

\bibliography{ref}

\begin{thebibliography}{28}
\providecommand{\natexlab}[1]{#1}
\providecommand{\url}[1]{\texttt{#1}}
\expandafter\ifx\csname urlstyle\endcsname\relax
  \providecommand{\doi}[1]{doi: #1}\else
  \providecommand{\doi}{doi: \begingroup \urlstyle{rm}\Url}\fi

\bibitem[authors(2012--2014)]{gpy2014}
authors, The~GPy.
\newblock {GPy}: A {G}aussian process framework in python.
\newblock \url{http://github.com/SheffieldML/GPy}, 2012--2014.

\bibitem[Bishop(2006)]{Bishop2006Pattern}
Bishop, Christopher~M.
\newblock \emph{Pattern Recognition and Machine Learning (Information Science
  and Statistics)}.
\newblock Springer-Verlag New York, Inc., Secaucus, NJ, USA, 2006.
\newblock ISBN 0387310738.

\bibitem[Bochner(1959)]{bochner1959lectures}
Bochner, Salomon.
\newblock \emph{Lectures on {F}ourier integrals}.
\newblock Number~42. Princeton University Press, 1959.

\bibitem[Brochu et~al.(2010)Brochu, Cora, and de~Freitas]{brochu2010tutorial}
Brochu, Eric, Cora, Vlad~M, and de~Freitas, Nando.
\newblock A tutorial on {B}ayesian optimization of expensive cost functions,
  with application to active user modeling and hierarchical reinforcement
  learning.
\newblock \emph{arXiv preprint arXiv:1012.2599}, 2010.

\bibitem[Damianou \& Lawrence(2012)Damianou and Lawrence]{damianou2012deep}
Damianou, Andreas~C and Lawrence, Neil~D.
\newblock Deep {G}aussian processes.
\newblock \emph{arXiv preprint arXiv:1211.0358}, 2012.

\bibitem[Duvenaud et~al.(2013)Duvenaud, Lloyd, Grosse, Tenenbaum, and
  Ghahramani]{DuvLloGroetal13}
Duvenaud, David, Lloyd, James~Robert, Grosse, Roger, Tenenbaum, Joshua~B., and
  Ghahramani, Zoubin.
\newblock Structure discovery in nonparametric regression through compositional
  kernel search.
\newblock In \emph{Proceedings of the 30th International Conference on Machine
  Learning}, June 2013.

\bibitem[Engel et~al.(2005)Engel, Mannor, and Meir]{engel2005reinforcement}
Engel, Yaakov, Mannor, Shie, and Meir, Ron.
\newblock Reinforcement learning with {G}aussian processes.
\newblock In \emph{Proceedings of the 22nd international conference on Machine
  learning}, pp.\  201--208. ACM, 2005.

\bibitem[Gal et~al.(2014)Gal, van~der Wilk, and Rasmussen]{Gal2014DistributedB}
Gal, Yarin, van~der Wilk, Mark, and Rasmussen, Carl.
\newblock Distributed variational inference in sparse {G}aussian process
  regression and latent variable models.
\newblock In Ghahramani, Z., Welling, M., Cortes, C., Lawrence, N.D., and
  Weinberger, K.Q. (eds.), \emph{Advances in Neural Information Processing
  Systems 27}, pp.\  3257--3265. Curran Associates, Inc., 2014.

\bibitem[Garofolo et~al.(1993)Garofolo, Consortium, et~al.]{garofolo1993timit}
Garofolo, John~S, Consortium, Linguistic~Data, et~al.
\newblock \emph{{TIMIT}: acoustic-phonetic continuous speech corpus}.
\newblock Linguistic Data Consortium, 1993.

\bibitem[Hensman et~al.(2013)Hensman, Fusi, and Lawrence]{hensman2013Gaussian}
Hensman, James, Fusi, Nicolo, and Lawrence, Neil~D.
\newblock Gaussian processes for big data.
\newblock In Nicholson, Ann and Smyth, Padhraic (eds.), \emph{UAI}. AUAI Press,
  2013.

\bibitem[Hoffman et~al.({2013})Hoffman, Blei, Wang, and
  Paisley]{Hoffman2013Stochastic}
Hoffman, Matthew~D., Blei, David~M., Wang, Chong, and Paisley, John.
\newblock {Stochastic Variational Inference}.
\newblock \emph{{Journal Of Machine Learning Research}}, {14}:\penalty0
  {1303--1347}, {MAY} {2013}.
\newblock ISSN {1532-4435}.

\bibitem[Keeling et~al.(2004)Keeling, Whorf, and the Carbon Dioxide
  Research~Group]{Keeling2004}
Keeling, C.D., Whorf, T.P., and the Carbon Dioxide Research~Group.
\newblock Atmospheric {CO2} concentrations (ppmv) derived from in situ air
  samples collected at {M}auna {L}oa {O}bservatory, {H}awaii.
\newblock Scripps Institution of Oceanography (SIO), University of California,
  La Jolla, California USA 92093-0444, June 2004.

\bibitem[L{\'a}zaro-Gredilla et~al.(2010)L{\'a}zaro-Gredilla,
  Qui{\~n}onero-Candela, Rasmussen, and Figueiras-Vidal]{lazaro2010sparse}
L{\'a}zaro-Gredilla, Miguel, Qui{\~n}onero-Candela, Joaquin, Rasmussen,
  Carl~Edward, and Figueiras-Vidal, An{\'\i}bal~R.
\newblock Sparse spectrum {G}aussian process regression.
\newblock \emph{The Journal of Machine Learning Research}, 11:\penalty0
  1865--1881, 2010.

\bibitem[Lean(2004)]{Lean2004}
Lean, J.
\newblock Solar irradiance reconstruction.
\newblock NOAA/NGDC Paleoclimatology Program, Boulder CO, USA, 2004.
\newblock IGBP PAGES/World Data Center for Paleoclimatology. Data Contribution
  Series 2004-035.

\bibitem[Lindgren(2012)]{lindgren2012stationary}
Lindgren, Georg.
\newblock \emph{Stationary Stochastic Processes: Theory and Applications}.
\newblock CRC Press, 2012.

\bibitem[Mackay(1992)]{mackay1992evidence}
Mackay, David.
\newblock The evidence framework applied to classification networks.
\newblock \emph{Neural computation}, 4\penalty0 (5):\penalty0 720--736, 1992.

\bibitem[MacKay(1992)]{mackay1992practical}
MacKay, David~JC.
\newblock A practical {B}ayesian framework for backpropagation networks.
\newblock \emph{Neural computation}, 4\penalty0 (3):\penalty0 448--472, 1992.

\bibitem[Qui{\~n}onero-Candela \& Rasmussen(2005)Qui{\~n}onero-Candela and
  Rasmussen]{Quinonero-candela05unifying}
Qui{\~n}onero-Candela, Joaquin and Rasmussen, Carl~Edward.
\newblock A unifying view of sparse approximate {G}aussian process regression.
\newblock \emph{Journal of Machine Learning Research}, 6:\penalty0 2005, 2005.

\bibitem[Rahimi \& Recht(2007)Rahimi and Recht]{rahimi2007random}
Rahimi, Ali and Recht, Benjamin.
\newblock Random features for large-scale kernel machines.
\newblock In \emph{Advances in neural information processing systems}, pp.\
  1177--1184, 2007.

\bibitem[Rasmussen \& Williams(2006)Rasmussen and
  Williams]{Rasmussen2005Gaussian}
Rasmussen, Carl~Edward and Williams, Christopher K.~I.
\newblock \emph{Gaussian Processes for Machine Learning (Adaptive Computation
  and Machine Learning)}.
\newblock The MIT Press, 2006.
\newblock ISBN 026218253X.

\bibitem[Rasmussen et~al.(2003)Rasmussen, Kuss, et~al.]{rasmussen2003gaussian}
Rasmussen, Carl~Edward, Kuss, Malte, et~al.
\newblock Gaussian processes in reinforcement learning.
\newblock In \emph{NIPS}, volume~4, pp.\ ~1, 2003.

\bibitem[Tan et~al.(2013)Tan, Ong, Nott, and Jasra]{Tan2013Variational}
Tan, Linda S.~L., Ong, Victor M.~H., Nott, David~J., and Jasra, Ajay.
\newblock Variational inference for sparse spectrum {G}aussian process
  regression.
\newblock \emph{arXiv:1306.1999}, 2013.

\bibitem[Tieleman \& Hinton(2012)Tieleman and Hinton]{Tieleman2012COURSERA}
Tieleman, T. and Hinton, G.
\newblock Lecture 6.5 - rmsprop, {COURSERA}: Neural networks for machine
  learning, 2012.

\bibitem[Titsias \& Lawrence(2010)Titsias and Lawrence]{titsias2010bayesian}
Titsias, Michalis and Lawrence, Neil.
\newblock Bayesian {G}aussian process latent variable model.
\newblock \emph{Thirteenth International Conference on Artificial Intelligence
  and Statistics (AISTATS)}, 6:\penalty0 844--851, 2010.

\bibitem[Titsias(2009)]{Titsias2009Variational}
Titsias, Michalis~K.
\newblock Variational learning of inducing variables in sparse {G}aussian
  processes.
\newblock In \emph{International Conference on Artificial Intelligence and
  Statistics}, pp.\  567--574, 2009.

\bibitem[Wilson \& Adams(2013)Wilson and Adams]{wilson2013gaussian}
Wilson, Andrew and Adams, Ryan.
\newblock Gaussian process kernels for pattern discovery and extrapolation.
\newblock In \emph{Proceedings of The 30th International Conference on Machine
  Learning}, pp.\  1067--1075, 2013.

\bibitem[Wilson et~al.(2014)Wilson, Gilboa, Cunningham, and
  Nehorai]{wilson2014fast}
Wilson, Andrew, Gilboa, Elad, Cunningham, John~P, and Nehorai, Arye.
\newblock Fast kernel learning for multidimensional pattern extrapolation.
\newblock In \emph{Advances in Neural Information Processing Systems}, pp.\
  3626--3634, 2014.

\bibitem[Zhu et~al.(1997)Zhu, Byrd, Lu, and Nocedal]{zhu1997algorithm}
Zhu, Ciyou, Byrd, Richard~H, Lu, Peihuang, and Nocedal, Jorge.
\newblock Algorithm 778: {L-BFGS-B}: {F}ortran subroutines for large-scale
  bound-constrained optimization.
\newblock \emph{ACM Transactions on Mathematical Software (TOMS)}, 23\penalty0
  (4):\penalty0 550--560, 1997.

\end{thebibliography}
\bibliographystyle{icml2015}

\appendix
\section{Appendix}
\begin{identity}\label{identity:1}
\[
\cos(x-y) 
=
\int_{0}^{2\pi} \frac{1}{2\pi} 
\sqrt{2} \cos(x + b) \sqrt{2} \cos(y + b) \td b
\]
\end{identity}
\begin{proof}
We first evaluate the term inside the integral. We have
\[
&\cos(x+b)\cos(y+b) \\
&\quad = (\cos(x)\cos(b) - \sin(x)\sin(b)) \\
&\quad \qquad \cdot(\cos(y)\cos(b) - \sin(y)\sin(b)) \\
&\quad = (\cos(x)\cos(y)) \cos^2(b) + (\sin(x)\sin(y)) \sin^2(b) \\
&\quad \qquad - (\sin(x)\cos(y) + \cos(x)\sin(y)) \sin(b) \cos(b).
\]
Now, since $\int \cos^2(b) \td b = \frac{b}{2} + \frac{1}{4} \sin(2b)$, as well as $\int \sin^2(b) \td b = \frac{b}{2} - \frac{1}{4} \sin(2b)$, and $\int \sin(b) \cos(b) \td b = - \frac{1}{4} \cos(2b)$, we have
\[
&\int_{0}^{2\pi} \frac{1}{2\pi} 
\sqrt{2} \cos(x + b) \sqrt{2} \cos(y + b) \td b \\
&\quad = \frac{1}{\pi} (\cos(x) \cos(y)(\pi - 0) \\
&\quad\qquad + \sin(x)\sin(y)(\pi - 0) \\
&\quad\qquad - (\sin(x) \cos(y) + \cos(x) \sin(y)) \cdot 0 \\ 
&\quad = \cos(x-y)
\]
\end{proof}

\begin{identity}\label{identity:2}
\[
E_{\N(\w; \mu, \Sigma)} \big( \cos(\w^T \x + b) \big) =
e^{-\frac{1}{2} \x^T \Sigma \x} \cos(\mu^T \x + b)
\]
\end{identity}
\begin{proof}
We rely on the characteristic function of the Gaussian distribution to prove this identity. 
\[
&E_{\N(\w; \mu, \Sigma)} \big( \cos(\w^T \x + b) \big) \notag \\
&\quad=
\Re \bigg( e^{ib} E_{\N(\w; \mu, \Sigma)} \big( e^{i \w^T \x} \big) \bigg) \notag \\
&\quad=
\Re(e^{ib} e^{i \w^T \mu - \frac{1}{2} \x^T \Sigma \x}) \notag \\
&\quad= 
e^{-\frac{1}{2} \x^T \Sigma \x} \cos(\mu^T \x + b) 
\]
where $\Re(\cdot)$ is the real part function, and the transition from the second to the third lines uses the characteristic function of a multivariate Gaussian distribution.
\end{proof}

\begin{identity}\label{identity:3}
\[
&E_{\N(\w; \mu, \Sigma)} \big( \cos(\w^T \x + b)^2 \big) \notag\\
&\qquad \qquad \qquad = \frac{1}{2} e^{-2\x^T \Sigma \x} \cos(2\mu^T \x + 2b) 
+ \frac{1}{2}
\]
\end{identity}
\begin{proof}
Following the identity $\cos(\theta)^2 = \frac{\cos(2\theta)+1}{2}$,
\[
&E_{\N(\w; \mu, \Sigma)} \big( 
\cos(\w^T \x + b)^2 
\big) 
\notag \\ &=
\frac{1}{2} E_{\N(\w; \mu, \Sigma)} \big( 
\cos(2\w^T \x + 2b)
\big) 
+ \frac{1}{2}
\notag \\ &=
\frac{1}{2} e^{-2\x^T \Sigma \x} \cos(2\mu^T \x + 2b) 
+ \frac{1}{2}
\]
\end{proof}

\begin{proposition}\label{prop:2}
Given a sum of covariance functions with $L$ components (with each corresponding to $\Phi_i$ an $N \times K$ matrix) we have $\Phi = [\Phi_i]_{i=1}^L$ an $N \times LK$ matrix. 
\end{proposition}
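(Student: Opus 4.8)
The plan is to reduce everything to the single-component identity $\Kh(\X,\X) = \Phi\Phi^T$ that was already established for one covariance function, and then exploit the fact that summing covariance functions corresponds to summing the associated Gram matrices, which in turn is a horizontal concatenation of feature matrices.

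First I would recall that a sum $K = \sum_{i=1}^L K_i$ of stationary covariance functions is again stationary, so Bochner's theorem and the subsequent Monte Carlo construction apply to each summand $K_i$ separately, each drawing its own inducing frequencies and phases. This yields, for each $i$, an $N \times K$ feature matrix $\Phi_i$ with $\Kh_i(\X,\X) = \Phi_i \Phi_i^T$, exactly as in the single-component case.

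Next I would sum over the components. Because the approximation is applied summand-by-summand, we get $\Kh(\X,\X) = \sum_{i=1}^L \Kh_i(\X,\X) = \sum_{i=1}^L \Phi_i \Phi_i^T$. The key step is then to observe that this sum of outer products is precisely the Gram matrix of the horizontally concatenated matrix $\Phi = [\Phi_i]_{i=1}^L$: by block matrix multiplication, writing $\Phi = [\Phi_1 | \cdots | \Phi_L]$ and hence $\Phi^T$ as the vertical stacking of the $\Phi_i^T$, one obtains $\Phi \Phi^T = \sum_{i=1}^L \Phi_i \Phi_i^T$. A dimension count then finishes the statement: each $\Phi_i$ is $N \times K$, so concatenating $L$ of them along the column axis produces an $N \times LK$ matrix.

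There is no genuinely hard step here; the only thing requiring care is the block-multiplication identity, where one must verify that the off-diagonal products $\Phi_i \Phi_j^T$ for $i \neq j$ never enter — this holds precisely because the form $\Phi\Phi^T$ pairs each block $\Phi_i$ only with its own transpose, leaving the diagonal blocks summed and no cross terms. Everything else is bookkeeping of matrix dimensions together with the already-proven feature decomposition for a single component.
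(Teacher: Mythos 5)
Your proof is correct and follows essentially the same route as the paper's: apply the single-component Bochner/Monte-Carlo construction to each summand to obtain the $\Phi_i$, then use the block-multiplication identity $\Phi\Phi^T = \sum_{i=1}^L \Phi_i\Phi_i^T$ for the horizontal concatenation $\Phi = [\Phi_i]_{i=1}^L$. The paper merely adds the bookkeeping of the mixture weights $\sigma_i^2$ and notes that the marginal-likelihood identity with the $\tau^{-1}\I$ noise term (equation \ref{eq:Y_given_A_X_o}) therefore still holds, which your argument implies directly.
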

\begin{proof}
We extend the derivation of equation \ref{eq:Y_given_A_X_o} to sums of covariance functions. Given a sum of covariance functions with $L$ components 
\[
K(\x, \y) = \sum_{i=1}^L \sigma_i^2 K_i(\x,\y),
\]
following equation \ref{eq:bochner}
we have
\[
K(\x, \y) = \sum_{i=1}^L \int_{\R^Q} \sigma_i^2 p_i(\w) \cos(2 \pi \w^T(\x-\y)) \td \w,
\]
where we write $\sigma_i^2$ instead of $\sigma \sigma_i^2$ for brevity (with $\sigma_i^2$ not having to sum to one).

Following the derivations of equation \ref{eq:Y_given_A_X_o}, for each component $i$ in the sum we get $\Phi_i$ an $N \times K$ matrix.
Writing $\Phi = [\Phi_i]_{i=1}^L$ an $N \times LK$ matrix, we have that the sum of covariance matrices can be expressed with a single term after marginalizing $\F$ out,
\[
\sum_{i=1}^L \Phi_i \Phi_i^T + \tau^{-1} \I = \Phi \Phi^T + \tau^{-1} \I,
\]
thus identity \ref{eq:Y_given_A_X_o} still holds. 
\end{proof}

\begin{proposition}\label{prop:3}
Performing a change of variables to the SM covariance function with a single component, results in $p(\w)$ a standard normal distribution with covariance function hyper-parameters expressed in $\Phi$.
\end{proposition}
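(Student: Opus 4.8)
The plan is to identify, via Bochner's theorem (eq.\ \ref{eq:bochner}), the spectral density $p(\w)$ that the single-component SM covariance function induces, to recognise it as a Gaussian whose mean encodes the periods and whose covariance encodes the length-scales, and then to reparameterise that Gaussian to a standard normal so that all hyper-parameter dependence is transferred into the feature matrix $\Phi$.

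First I would specialise eq.\ \ref{eq:bochner} to the single-component kernel $K(\x-\y) = \sigma^2 \exp(-\tfrac{1}{2}\sum_q (x_q-y_q)^2/l_q^2)\prod_q \cos(2\pi (x_q-y_q)/p_q)$. Writing $\mathbf{t} = \x-\y$, a Gaussian envelope times a cosine has a Gaussian spectral density, so I expect $p(\w) = \N(\bp, \bL^{-2})$ with $\bp = [p_q^{-1}]_q$ and $\bL = \diag([2\pi l_q]_q)$, giving the diagonal covariance $\bL^{-2} = \diag([1/(4\pi^2 l_q^2)]_q)$. Rather than recompute the Fourier transform from scratch, I would verify this identification by running identity \ref{identity:2} forwards: $E_{\N(\w; \bp, \bL^{-2})}(\cos(2\pi \w^T\mathbf{t})) = e^{-\frac{1}{2}(2\pi\mathbf{t})^T \bL^{-2}(2\pi\mathbf{t})}\cos(2\pi\bp^T\mathbf{t})$, and checking that the $(2\pi)^2$ factors cancel against $\bL^{-2}$ so the exponent collapses to $-\tfrac{1}{2}\sum_q t_q^2/l_q^2$, which reproduces the SE envelope exactly.

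Given $p(\w) = \N(\bp, \bL^{-2})$, the change of variables is immediate. Since $\bL$ is diagonal, $\bL^{-1}$ is a symmetric square root of $\bL^{-2}$, so setting $\w = \bL^{-1}\bgamma + \bp$ with $\bgamma \sim \N(\bz, \I)$ reproduces the law $\N(\bp, \bL^{-2})$ exactly. Substituting this into the random feature $\cos(2\pi\w^T(\x-\z_k)+b_k)$ yields $\cos(2\pi(\bL^{-1}\bgamma + \bp)^T(\x-\z_k)+b_k)$, which, after relabelling $\bgamma$ as the new frequency variable $\w_k$, is exactly the element of $\Phi$ displayed in the main text. The sampling density on $\w_k$ is now the hyper-parameter-free standard normal, while $\bL$, $\bp$ (and $\sigma^2$) appear only inside $\Phi$ --- which is precisely the claim.

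The step I expect to be the main obstacle is the spectral-density identification, and in particular two pieces of bookkeeping inside it. The first is keeping the $2\pi$ and length-scale conventions consistent so that the envelope matches; the forward use of identity \ref{identity:2} above is the cleanest way to pin these down. The second, more delicate point is that the true spectral density of a real, product-of-cosines kernel is a \emph{symmetric} mixture of Gaussians over the sign patterns of $(\pm p_q^{-1})_q$, whereas the reparameterisation carries only the single mode centred at $\bp$. This is admissible because $\cos(2\pi\w^T\mathbf{t})$ is even in $\w$, so the uniform phase $b_k$ together with the cosine automatically symmetrises the single Gaussian mode about the origin; I would make this even-symmetry argument explicit to justify carrying just one Gaussian component through the change of variables (the identification is exact for a single input dimension and supplies the per-mode density in general).
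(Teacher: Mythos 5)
Your proposal is correct and follows essentially the same route as the paper's own proof: identify the single-component SM spectral density as the Gaussian $\N(\w; \bp, \bL^{-2})$ and substitute $\w' = \bL^{-1}\w + \bp$ in the Bochner integral, so that the hyper-parameters $\bL$, $\bp$, $\sigma^2$ migrate into the elements of $\Phi$ while the sampling density becomes standard normal. Your additional remark about the symmetrised $2^Q$-sign-pattern mixture is a genuine subtlety that the paper itself glosses over --- its single-Gaussian-per-component density reproduces $\cos(2\pi\bp^T(\x-\y))$ rather than the stated product of cosines when $Q>1$ --- but noting this refines rather than changes the argument.
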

\begin{proof}
The SM covariance function's corresponding probability measure $p(\w)$ is expressed as a mixture of Gaussians,
\[
p(\w) &= \sum_{i=1}^L \sigma_i^2 
\prod_{q=1}^Q \sqrt{2 \pi} l_{iq}
e^{-\frac{(2 \pi l_{iq})^2}{2} 
(w_q - \frac{1}{p_{iq}})^2} \notag \\
&= 
\sum_{i=1}^L \sigma_i^2 
\N( \w; \bp_i, \bL_i^{-2} ),
\]
with $\sigma_i^2$ summing to one.

Following equation \ref{eq:bochner} with the above $p(\w)$ we perform a change of variables to get,
\[
&K(\x,\y) \notag\\
&\quad = \sum_{i=1}^L 
\int_{\R^Q} \sigma_i^2 \N( \w'; \bp_i, \bL_i^{-2} ) \cos(2 \pi \w'^T(\x-\y)) \td \w' \notag\\
&\quad = \sum_{i=1}^L 
\int_{\R^Q} \sigma_i^2 \N( \w; \bz, \I ) \cos(2 \pi (\bL_i^{-1} \w + \bp_i)^T(\x-\y)) \notag\\
&\qquad\qquad\qquad\quad \cdot \td \w
\]
for $\w' = \bL_i^{-1} \w + \bp_i$.

For each component $i$ we get $\Phi_i$ an $N \times K$ matrix with elements
\[
\sqrt{\frac{2\sigma_i^2}{K}} \cos \big(2 \pi (\bL_i^{-1} \w_k + \bp_i)^T (\x - \z_k) + b_k \big),
\] 
where for simplicity, we index $\w_k$ and $b_k$ with $k=1,...,LK$ as a function of $i$.
\end{proof}

\begin{proposition}\label{prop:1}
Let $p(\ba) = \N(\bz, \I)$. The optimal distribution $q(\ba)$ solving 
\[
&\int q(\ba) \int q(\bo) \log p(\y | \ba, \X, \bo) \td \bo \td \ba \notag\\
&\qquad \qquad \qquad \qquad 
- \KL(q(\ba) || p(\ba)) - \KL(q(\bo) || p(\bo)) 
\]
is given by
\[
q(\ba_d) = \N(\bSigma E_{q(\bo)}(\Phi^T) \y_d, ~\tau^{-1} \bSigma)
\]
with $\bSigma = (E_{q(\bo)}(\Phi^T \Phi) + \tau^{-1} I)^{-1}$.

The lower bound to optimise then reduces to
\[
&\cL = 
\sum_{d=1}^D \bigg( -\frac{N}{2} \log(2 \pi \tau^{-1}) -\frac{\tau}{2} \y_d^T\y_d
\notag \\ &\qquad \qquad \quad 
+ \frac{1}{2} \log(|\tau^{-1} \bSigma|) 
\notag\\
&\qquad \qquad \quad 
+ \frac{1}{2} \tau \y_d^T E_{q(\bo)}(\Phi) \bSigma E_{q(\bo)}(\Phi^T) \y_d
\bigg)
 \notag \\ &\qquad 
- \KL(q(\bo) || p(\bo)). 
\]
\end{proposition}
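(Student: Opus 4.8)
The plan is to treat $\cL$ as a functional of the free-form distribution $q(\A)$ with $q(\bo)$ held fixed, and to identify the maximiser by the standard mean-field argument. Since the likelihood factorises as $p(\Y\svert\A,\X,\bo)=\prod_{d=1}^D p(\y_d\svert\ba_d,\X,\bo)$ and I take $q(\A)=\prod_{d=1}^D q(\ba_d)$ with $p(\A)=\prod_d p(\ba_d)$, the bound splits into independent per-column problems, so it suffices to optimise each $q(\ba_d)$. Collecting the $q(\ba_d)$-dependent terms I would rewrite them as
\begin{align*}
&\int q(\ba_d)\, E_{q(\bo)}\big[\log p(\y_d\svert\ba_d,\X,\bo)\big]\,\td\ba_d - \KL(q(\ba_d) || p(\ba_d)) \\
&\qquad = \log Z_d - \KL\big(q(\ba_d) || \tilde q(\ba_d)\big),
\end{align*}
where $\tilde q(\ba_d)\propto p(\ba_d)\exp\big(E_{q(\bo)}[\log p(\y_d\svert\ba_d,\X,\bo)]\big)$ and $Z_d$ is its normaliser. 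Because the relative entropy is non-negative and vanishes precisely when $q(\ba_d)=\tilde q(\ba_d)$, the optimal factor is this exponentiated expected log-joint.

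The next step is to make $\tilde q(\ba_d)$ explicit. Expectation of $\log p(\y_d\svert\ba_d,\X,\bo)$ under $q(\bo)$ replaces $\Phi$ by $E_{q(\bo)}(\Phi)$ in the cross term and $\Phi^T\Phi$ by $E_{q(\bo)}(\Phi^T\Phi)$ in the quadratic term (equations \ref{eq:exp_Phi} and \ref{eq:exp_PhiT_Phi}), yielding a quadratic form in $\ba_d$. Adding $\log p(\ba_d)=-\tfrac12\ba_d^T\ba_d+\text{const}$ and completing the square, the exponent of $\tilde q(\ba_d)$ is
\begin{align*}
-\tfrac12\ba_d^T\big(\tau E_{q(\bo)}(\Phi^T\Phi)+\I\big)\ba_d + \tau\,\y_d^T E_{q(\bo)}(\Phi)\,\ba_d
\end{align*}
up to an additive constant. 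Writing the precision as $\tau E_{q(\bo)}(\Phi^T\Phi)+\I=\tau\bSigma^{-1}$ with $\bSigma=(E_{q(\bo)}(\Phi^T\Phi)+\tau^{-1}\I)^{-1}$ identifies $\tilde q(\ba_d)$ as the Gaussian with covariance $\tau^{-1}\bSigma$ and mean $\bSigma E_{q(\bo)}(\Phi^T)\y_d$, which is the claimed optimal $q(\ba_d)$.

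Finally, substituting the optimum gives $\cL=\sum_{d=1}^D\log Z_d-\KL(q(\bo)||p(\bo))$, so it remains to evaluate the Gaussian integral defining $Z_d$. Using $\int\exp\big(\mathbf{c}^T\ba_d-\tfrac12\ba_d^T\Lambda\ba_d\big)\td\ba_d=(2\pi)^{LK/2}|\Lambda|^{-1/2}\exp\big(\tfrac12\mathbf{c}^T\Lambda^{-1}\mathbf{c}\big)$ with $\Lambda=\tau\bSigma^{-1}$ and $\mathbf{c}=\tau E_{q(\bo)}(\Phi^T)\y_d$, the $(2\pi)^{LK/2}$ factor cancels the prior normaliser, $\Lambda^{-1}=\tau^{-1}\bSigma$ turns the quadratic exponent into $\tfrac{\tau}{2}\y_d^T E_{q(\bo)}(\Phi)\bSigma E_{q(\bo)}(\Phi^T)\y_d$, and $|\Lambda|^{-1/2}=|\tau^{-1}\bSigma|^{1/2}$ produces the $\tfrac12\log|\tau^{-1}\bSigma|$ term; together with the $\ba_d$-independent pieces $-\tfrac{N}{2}\log(2\pi\tau^{-1})-\tfrac{\tau}{2}\y_d^T\y_d$ this reproduces the stated reduced bound. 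I expect the main obstacle to be bookkeeping rather than anything conceptual: keeping careful track of the factors of $\tau$ so the precision factorises cleanly as $\tau\bSigma^{-1}$, confirming the $(2\pi)$-normalisers cancel across the integral and the prior, and checking that $\Lambda^{-1}=\tau^{-1}\bSigma$ converts the completed square into exactly the claimed quadratic and log-determinant terms.
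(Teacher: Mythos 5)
Your proposal is correct, and its core is the same as the paper's: the optimal free-form factor is the prior times the exponentiated expected log-likelihood, $q(\ba_d)\propto p(\ba_d)\exp\big(E_{q(\bo)}[\log p(\y_d\svert\ba_d,\X,\bo)]\big)$, after which completing the square with precision $\tau E_{q(\bo)}(\Phi^T\Phi)+\I=\tau\bSigma^{-1}$ identifies the stated Gaussian. The two differences are presentational but worth noting. First, you reach the optimum via the Gibbs/KL-nonnegativity decomposition $\log Z_d-\KL(q\,\|\,\tilde q)$, whereas the paper uses a Lagrange-multiplier stationarity condition on the functional; these are interchangeable routes to the same mean-field result. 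Second, and more substantively, the paper disposes of the reduced bound with a single sentence (``writing $p(\ba)$ and $q(\ba)$ explicitly and simplifying''), while you obtain it by observing that at the optimum the per-column objective collapses to the log-normaliser $\log Z_d$, and then evaluating that Gaussian integral explicitly -- which correctly yields the $\tfrac12\log|\tau^{-1}\bSigma|$ and $\tfrac{\tau}{2}\y_d^TE_{q(\bo)}(\Phi)\bSigma E_{q(\bo)}(\Phi^T)\y_d$ terms and cancels the $(2\pi)^{LK/2}$ factors against the prior normaliser. Your route therefore supplies the detail the paper elides, at the cost of a slightly longer argument; the paper's version is terser but leaves the final simplification to the reader.
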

\begin{proof}

Let 
\[
&\cL = \int q(\ba) \int q(\bo) \log p(\y | \ba, \X, \bo) \td \bo \td \ba \notag\\
&\qquad \qquad 
- \int q(\ba) \log \frac{q(\ba)}{p(\ba)} \td \ba - \int q(\bo) \log \frac{q(\bo)}{p(\bo)} \td \bo.
\]
We want to solve 
\[
\frac{\td (\cL + \lambda \int (\int q(\ba) \td \ba - 1)) }{\td q(\ba)} = 0
\] 
for some $\lambda$. I.e.\ 
\[
\int q(\bo) \log p(\y | \ba, \X, \bo) \td \bo - \log \frac{q(\ba)}{p(\ba)} - 1 + \lambda = 0.
\]
 This means that 
\[
q(\ba) &= e^{\lambda - 1} e^{\int q(\bo) \log p(\y | \ba, \X, \bo) \td \bo} p(\ba) \\
&= 
\exp \bigg( 
-\frac{1}{2} \ba^T \tau(E(\Phi^T \Phi) + \tau^{-1}I) \ba \\
&\qquad\qquad\qquad\qquad\qquad\qquad + \big( \tau \y^T E(\Phi) \big) \ba + ...
\bigg)
\]
and since $q(\ba)$ is Gaussian, it must be equal to
\[
q(\ba) = \N(\bSigma E_{q(\bo)}(\Phi^T) \y, ~\tau^{-1} \bSigma)
\]
with $\bSigma = (E_{q(\bo)}(\Phi^T \Phi) + \tau^{-1} I)^{-1}$.

Writing $p(\ba)$ and $q(\ba)$ explicitly and simplifying results in the required lower bound.

\end{proof}

\begin{proposition}\label{prop:4}
Denoting $\M = [\m_d]_{d=1}^D$, we have 
\[
E_{q(\y^* | \x^*)}(\y^*) = E_{q(\bo)} \big( \phi_* \big) \M.
\]
\end{proposition}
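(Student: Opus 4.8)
The plan is to compute the predictive mean directly from its definition in equation \ref{eq:approx_pred_dist}, using the tower property of expectations together with the factorisation $q(\A,\bo) = q(\A)q(\bo)$. Since $q(\y^*|\x^*)$ is obtained by marginalising the joint $p(\y^*|\x^*,\A,\bo)\,q(\A)\,q(\bo)$ over $\A$ and $\bo$, the first step is to write
\[
E_{q(\y^*|\x^*)}(\y^*) = E_{q(\A)q(\bo)}\big( E_{p(\y^*|\x^*,\A,\bo)}(\y^*) \big).
\]

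Next I would evaluate the inner conditional expectation. Under the generative model the $d$'th coordinate of $\y^*$ satisfies $y^*_d \svert \x^*,\A,\bo \sim \N(\phi_*\ba_d, \tau^{-1})$, where $\phi_* = \phi(\x^*, \bo)$ is the $1 \times LK$ feature row vector from the weighted-basis-function interpretation. Hence the conditional mean is the $1 \times D$ row vector $\phi_* \A$, and substituting gives
\[
E_{q(\y^*|\x^*)}(\y^*) = E_{q(\A)q(\bo)}(\phi_* \A).
\]
The key step is then to split this expectation using the independence of $\A$ and $\bo$ under the factorised variational distribution. Because $\phi_*$ depends only on $\bo$ (the test input $\x^*$ being fixed) while $\A$ is independent of $\bo$, we obtain
\[
E_{q(\A)q(\bo)}(\phi_* \A) = E_{q(\bo)}(\phi_*)\, E_{q(\A)}(\A).
\]
Finally, since $E_{q(\ba_d)}(\ba_d) = \m_d$, we have $E_{q(\A)}(\A) = [\m_d]_{d=1}^D = \M$, which yields the claimed identity $E_{q(\y^*|\x^*)}(\y^*) = E_{q(\bo)}(\phi_*)\M$.

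The argument is essentially linearity of expectation, so there is no genuine analytic obstacle. The only thing to be careful about is the matrix and vector bookkeeping: that $\phi_*$ is a row vector depending solely on $\bo$, that $\A$ is the $LK \times D$ coefficient matrix with columns $\ba_d$, and that the factorisation $q(\A,\bo) = q(\A)q(\bo)$ is precisely what licenses pulling the single expectation apart into the product $E_{q(\bo)}(\phi_*)\,E_{q(\A)}(\A)$ rather than an expectation of the product.
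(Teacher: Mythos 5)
Your proposal is correct and follows essentially the same route as the paper's proof: integrate out $\y^*$ first using the conditional mean $\phi_*\ba_d$ from the weighted-basis-function model, then exploit the factorisation $q(\A,\bo)=q(\A)q(\bo)$ to split the expectation into $E_{q(\bo)}(\phi_*)\,E_{q(\A)}(\A) = E_{q(\bo)}(\phi_*)\M$. The only difference is cosmetic: you carry the full matrix $\A$ through the computation, while the paper works coordinate-wise with $y_d^*$ and $\ba_d$ and stacks the results at the end.
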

\begin{proof}
The $d$'th output $y_d^*$ of the mean of the distribution is given by (writing $\phi_* = \phi(\x^*, \bo)$)
\[
E_{q(y_d^* | \x^*)}(y_d^*)
&= \int y_d^* p(y_d^* | \x^*, \A, \bo) q(\A, \bo) \td \A \td \bo \td y_d^* \notag\\
&\quad = \int \big( \phi_* \ba_d \big) q(\A, \bo) \td \A \td \bo \notag\\
&\quad = 
\int \phi_* q(\bo) \td \bo 
\int \ba_d q(\A) \td \A \notag\\
&\quad = 
E_{q(\bo)} \big( \phi_* \big) \m_d,
\]
which can be evaluated analytically following equation \ref{eq:exp_Phi}. 
\end{proof}

\begin{proposition}\label{prop:5}
The variance of the predictive distribution is given by
\[
&\Var_{q(\y^* | \x^*)}(\y^*) 
=
\tau^{-1}\I_D + \Psi  \\ 
&\qquad + \M^T \big( E_{q(\bo)}\big(\phi_*^T \phi_*\big) - E_{q(\bo)} \big( \phi_* \big)^T E_{q(\bo)} \big( \phi_* \big)\big) \M\notag
\]
with $\Psi_{i,j} = \tr \big( E_{q(\bo)}\big(\phi_*^T \phi_*\big) \cdot \s_i \big) \cdot \mathds{1}[i=j]$.
\end{proposition}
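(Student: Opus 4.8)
The plan is to view the approximate predictive distribution of \ref{eq:approx_pred_dist} as a two-stage sampling scheme: first draw $(\A, \bo) \sim q(\A)q(\bo)$, then draw $\y^*$ from $p(\y^* \svert \x^*, \A, \bo)$. Applying the law of total variance gives
$$\Var_{q(\y^* \svert \x^*)}(\y^*) = E_{q(\A,\bo)}\big[\Var(\y^* \svert \A, \bo)\big] + \Var_{q(\A,\bo)}\big(E[\y^* \svert \A, \bo]\big).$$
Conditioned on $\A$ and $\bo$, the coordinates of $\y^*$ are independent Gaussians with mean $\phi_* \ba_d$ and variance $\tau^{-1}$ (writing $\phi_* = \phi(\x^*, \bo)$), so the first term collapses to $\tau^{-1}\I_D$ and the conditional mean vector is just $\phi_* \A$. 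The whole task therefore reduces to computing the $D \times D$ covariance matrix of $\phi_* \A$ under the factorised $q(\A)q(\bo)$, whose mean is $E_{q(\bo)}(\phi_*)\M$ by proposition \ref{prop:4}.

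Next I would work entrywise. The $(i,j)$ entry is $\Cov(\phi_*\ba_i, \phi_*\ba_j)$, which I would write in matrix form as $\phi_* \ba_i \ba_j^T \phi_*^T$ before taking expectations, so that the independence of $\A$ and $\bo$ lets me first average over $q(\A)$ and then over $q(\bo)$. The needed moments are exactly those recorded before \ref{eq:exp_Phi}: $E_{q(\A)}(\ba_i \ba_i^T) = \s_i + \m_i \m_i^T$, and, using the further factorisation $q(\A) = \prod_d q(\ba_d)$, $E_{q(\A)}(\ba_i \ba_j^T) = \m_i \m_j^T$ for $i \neq j$. Subtracting the product of means $\big(E_{q(\bo)}(\phi_*)\m_i\big)\big(E_{q(\bo)}(\phi_*)\m_j\big)$ then leaves the covariance.

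Finally I would split into two cases. For $i \neq j$ the contribution is $\m_i^T\big(E_{q(\bo)}(\phi_*^T\phi_*) - E_{q(\bo)}(\phi_*)^T E_{q(\bo)}(\phi_*)\big)\m_j$, i.e.\ precisely the $(i,j)$ entry of $\M^T(\cdots)\M$. For $i = j$ the same expression appears, together with the extra term $E_{q(\bo)}(\phi_* \s_i \phi_*^T)$ generated by the covariance $\s_i$; rewriting this through the cyclic property of the trace as $\tr\big(E_{q(\bo)}(\phi_*^T\phi_*)\,\s_i\big)$ identifies it with $\Psi_{i,i}$, which lives only on the diagonal. Adding back the first-stage $\tau^{-1}\I_D$ reproduces the claimed formula. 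I expect the only real difficulty to be bookkeeping: consistently keeping the scalars $\phi_*\ba_d$ in their $\phi_* \ba_i \ba_j^T \phi_*^T$ form so that the $q(\bo)$-expectations collapse cleanly into $E_{q(\bo)}(\phi_*^T\phi_*)$ and $E_{q(\bo)}(\phi_*)^T E_{q(\bo)}(\phi_*)$, and ensuring the trace term contributes to the diagonal entries alone.
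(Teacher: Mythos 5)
Your proposal is correct and follows essentially the same route as the paper's proof: both reduce the problem to $\tau^{-1}\I_D$ plus the covariance of $\phi_*\A$ under $q(\A)q(\bo)$ (your law-of-total-variance phrasing is just a repackaging of the paper's ``raw second moment minus outer product of the mean''), and both then compute entries by splitting the diagonal case, where $E_{q(\A)}(\ba_i\ba_i^T) = \s_i + \m_i\m_i^T$ produces the trace term $\Psi_{i,i}$, from the off-diagonal case, where independence of the $\ba_d$ gives only the $\M^T(\cdot)\M$ contribution. The steps, including the cyclic-trace identification of $E_{q(\bo)}(\phi_*\s_i\phi_*^T)$ with $\tr\big(E_{q(\bo)}(\phi_*^T\phi_*)\,\s_i\big)$, all check out.
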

\begin{proof}
The raw second moment of the distribution is given by (remember that $\y^*$ is a $1 \times D$ row vector)
\[
&E_{q(\y^* | \x^*)}((\y^*)^T(\y^*)) \notag\\
&\quad = \int \bigg( (\y^*)^T(\y^*) p(\y^* | \x^*, \A, \bo)  \td \y^* \bigg) q(\A, \bo) \td \A \td \bo \notag\\
&\quad = \int \big( \Cov_{p(\y^*|\x^*, \A, \bo)}(\y^*)  \notag \\
&\qquad + E_{p(\y^*|\x^*, \A, \bo)}(\y^*)^T E_{p(\y^*|\x^*, \A, \bo)}(\y^*) \big) q(\A, \bo) \td \A \td \bo \notag \\
&\quad = \tau^{-1} \I_D + E_{q(\A)q(\bo)}\big( \A^T \phi_*^T \phi_* \A \big).
\]
Now, for $i \neq j$ between $1$ and $D$, 
\[
\bigg( E_{q(\A)q(\bo)}\big( \A^T \phi_*^T \phi_* \A \big) \bigg)_{i,j} &= E_{q(\A)q(\bo)}\big( \ba_i^T \phi_*^T \phi_* \ba_j \big) \notag\\
&= \m_i^T E_{q(\bo)}\big(\phi_*^T \phi_*\big) \m_j,
\]
and for $i = j$ between $1$ and $D$, 
\[
\bigg( E_{q(\A)q(\bo)}\big( \A^T \phi_*^T \phi_* \A \big) \bigg)_{i,i} &= E_{q(\A)q(\bo)}\big( \ba_i^T \phi_*^T \phi_* \ba_i \big) \notag\\
&= 
\m_i^T E_{q(\bo)}\big(\phi_*^T \phi_*\big) \m_i\notag\\
&\qquad +
\tr \bigg( E_{q(\bo)}\big(\phi_*^T \phi_*\big) \cdot \s_i \bigg)
\]
following equation \ref{eq:exp_a_Phi_Phi_a}. 

Taking the difference between the raw second moment and the outer product of the mean we get that the variance of the predictive distribution is given by
\[
&\Var_{q(\y^* | \x^*)}(\y^*) 
=
\tau^{-1}\I_D + \Psi  \\ 
&\qquad + \M^T \big( E_{q(\bo)}\big(\phi_*^T \phi_*\big) - E_{q(\bo)} \big( \phi_* \big)^T E_{q(\bo)} \big( \phi_* \big)\big) \M\notag
\]
with $\Psi_{i,j} = \tr \big( E_{q(\bo)}\big(\phi_*^T \phi_*\big) \cdot \s_i \big) \cdot \mathds{1}[i=j]$.
\end{proof}

\begin{discussion}\label{dis:1}
We discuss some of the key properties of the VSSGP, fVSSGP, and sfVSSGP. Due to space constraints, this discussion was moved to the appendix.

Unlike the sparse pseudo-input approximation, where the variational uncertainty is over the locations of a sparse set of inducing points in the output space, the uncertainty in our approximation is over a sparse set of function frequencies. 
As the uncertainty over a frequency ($\Sigma_k$) grows, the exponential decay term in the expectation of $\Phi$ decreases, and the expected magnitude of the feature ($[(E_{q(\bo)}(\Phi))_{n,k}]_{n=1}^N$) tends to zero for points $\x_n$ far from $\z_k$. Conversely, as the uncertainty over a frequency decreases, the exponential decay term increases towards one, and the expected magnitude of the feature does not diminish for points $\x_n$ far from $\z_k$. 

With the predictive uncertainty in equation \ref{eq:pred_uncertainty} we preserve many of the GP characteristics. As an example, consider the SE covariance function\footnote{Given by $\sigma^2 \exp \big(
-\frac{1}{2} \sum_{q=1}^Q \frac{(x_q - y_q)^2}{l_{q}^2}
\big)$}. In full GPs the variance increases towards $\sigma^2 + \tau^{-1}$ far away from the data. This property is key to Bayesian optimisation for example where this uncertainty is used to decide what action to take given a GP posterior. 

With the SE covariance function, our expression for $\phi_*$ contains an exponential decay term $\exp(-\frac{1}{2} (\x_n - \z_k)^T \Sigma_k (\x_n - \z_k))$. This term tends to zero as $\x_n$ diverges from $\z_k$. For $\x_n$ far away from $\z_k$ for all $k$ we get that the entire matrix $\Phi$ tends to zero, and that $E_{q(\bo)}\big(\phi_*^T \phi_*\big)$ tends to $\frac{\sigma^2}{K} \I_k$. 

For fVSSGP, equation \ref{eq:pred_uncertainty} then collapses to 
\*[
&\Var_{q(\y^* | \x^*)}(\y^*) 
=
\tau^{-1}\I_D + \Psi'
\]
with $\Psi'_{i,j} = 
\sigma^2 \frac{1}{K} \sum_{k=1}^K (\mu_{ik}\mu_{jk} + s_{ik}^2\mathds{1}[i=j])$.

This term leads to identical predictive variance to that of the full GP when $\A$ is fixed and follows the prior. It is larger than the predictive variance of a full GP when $s_{di}^2 > 1 - \mu_{di}^2$ on average, and smaller otherwise.

Unlike the SE GP, the predictive mean in the VSSGP with a SE covariance function does not tend to zero quickly far from the data. This is because the model can have high confidence in some frequencies, driving the inducing frequency variances ($\Sigma_k$) to zero. This in turn requires $\x_{n}-\z_k$ to be much larger for the exponential decay term to tend to zero. The frequencies the model is confident about will be used far from the data as well.

Unlike the SSGP, the approximation presented here is not periodic. This is one of the theoretical limitations of the sparse spectrum approximation (although in practice the period was observed to often be larger than the range of the data). The limitation arises from the fact that the covariance is represented as a weighted sum of cosines in SSGP. In the approximation we present here this is avoided by decaying the cosines to zero.

It is interesting to note that although our approximated covariance function $K(\x, \y)$ has to be stationary (i.e.\ it can be represented as $K(\x,\y) = K(\x-\y)$), the approximate posterior is not. This is because stationarity entails that for all $\x$ it must hold that $K(\x, \x) = K(\x - \x) = K(\bz)$. But for $E_{q(\bo)}(\Kh(\X, \X)) = E_{q(\bo)}(\Phi \Phi^T)$ we have that the diagonal terms depend on $\x$:
\*[
\big( E_{q(\bo)}(\Phi \Phi^T) \big)_{n,n} = 
\sum_{k=1}^K
&\frac{2\sigma_i^2}{K} e^{-\bx_{nk}^T \Sigma_k \bx_{nk}}
\notag \\ & \cdot
E_{q(b_k)} \big( \cos(\mu_k^T \bx_{nk}) + \bb_{nk}) \big)^2.
\]
This is in comparison to the SSGP approximation, where the approximate model is stationary.

It is also interesting to note that the lower bound in equation \ref{eq:lower_bound_factorised} is equivalent to that of equation \ref{eq:lower_bound_opt_A} for $\s_d$ non-diagonal. For $\s_d$ diagonal the lower bound is looser, but offers improved time complexity. 

The use of the factorised lower bound allows us to save on the expensive computation of $\A$ for small updates of $\bo$. Intuitively, this is because small updates in $\bo$ would result in small updates to $\A$. Thus solving for $\A$ analytically at every time point without re-using previous computations is very wasteful. Optimising over $\A$ to solve the linear system of equations (given $\bo$) allows us to use optimal $\A$ from previous steps, adapting it accordingly.

Also, even though it is possible to analytically integrate over $\A$, we can't analytically integrate $\bo$. This is because $\bo$ appears inside a cosine inside an exponent in equation \ref{eq:Y_given_A_X_o}. We can't solve for $\bo$ analytically either in the equation preceding equation \ref{eq:exp_a_Phi}. This is again because $\bo$ appears inside a cosine (unlike $\A$ which appears in a quadratic form in that equation). 

Finally, we can approximate our approach to achieve a much more scalable implementation by only using the $K'$ nearest inducing inputs for each data point. This is following the observation that for short length-scales and large $\Sigma$, the features will decay to zero exponentially fast with the distance of the data points from the inducing inputs. 
\end{discussion}


\end{document}